\documentclass{article}

\usepackage{microtype}
\usepackage{graphicx}
\usepackage{subcaption}
\usepackage{booktabs} 
\usepackage{multirow}
\usepackage{colortbl}
\usepackage[table]{xcolor}

\usepackage{makecell}
\usepackage{hyperref}
\usepackage{enumitem}

\usepackage[accepted]{icml2026}

\usepackage{amsmath}
\usepackage{amssymb}
\usepackage{mathtools}
\usepackage{amsthm}

\usepackage[capitalize,noabbrev]{cleveref}

\theoremstyle{plain}
\newtheorem{theorem}{Theorem}[section]
\newtheorem{proposition}[theorem]{Proposition}

\theoremstyle{definition}
\newtheorem{definition}[theorem]{Definition}

\theoremstyle{remark}

\usepackage[textsize=tiny]{todonotes}

\usepackage{multirow}
\usepackage{bbm}
\usepackage{bm}

\icmltitlerunning{Private and Stable Test-Time Adaptation}

\begin{document}

\twocolumn[
  \icmltitle{Private and Stable Test-Time Adaptation with Differential Privacy}
  \icmlsetsymbol{equal}{*}

  \begin{icmlauthorlist}
    \icmlauthor{Zefeng Li}{equal,ubc,vector}
    \icmlauthor{Qiaoyue Tang}{equal,ubc}
    \icmlauthor{Mathias Lécuyer$^\dagger$}{ubc}
    \icmlauthor{Evan Shelhamer$^\dagger$}{ubc,vector}
  \end{icmlauthorlist}

  \icmlaffiliation{ubc}{University of British Columbia}
  \icmlaffiliation{vector}{Vector Institute}

  \icmlcorrespondingauthor{Zefeng Li}{zefengli@cs.ubc.ca}
  \icmlcorrespondingauthor{Qiaoyue Tang}{qiaoyuet@cs.ubc.ca}

  \icmlkeywords{Machine Learning, ICML}

  \vskip 0.3in
]

\printAffiliationsAndNotice{} 

\begin{abstract}
Test-time adaptation (TTA) can reduce error on new and different data by updating the model on these inputs during inference.
However, these updates raise the issue of privacy w.r.t. the testing data, because the model parameters now depend on all past inputs.
To control this privacy risk, we cast multiple popular TTA methods (Tent, EATA, SAR, DeYO, and COME) into differential privacy (DP) forms that apply per-sample gradient clipping and Gaussian noise for all updates.
On ImageNet-C, our DP-TTA methods provide adequate privacy at small cost to accuracy, and in the low-privacy regime the clipping mechanism of DP can even improve the accuracy and stability of adaptation in the continual setting.
These improvements to privacy and accuracy come at only modest computational overhead.
These first results on private TTA raise awareness of the issue,
inform the development of more private test-time updates, and identify per-sample clipping as an effective technique for improving the accuracy and stability of adaptation. Code is available at \url{https://github.com/van-hammerheads/private-stable-tta}.
\end{abstract}

\section{Introduction: Shifts, Updates, and Leaks}
Distribution shifts that change the data during the deployment of a deep network can severely reduce the performance of predictions~\cite{hendrycks2019benchmarking,geirhos2018generalisation,yao2022wildtime}. 
Such changes are common and include sensor noise due to extended use, unexpected weather and illumination changes, geographic or demographic shifts, and evolving data collection pipelines.
\emph{Test-time adaptation} (TTA) addresses these challenges by updating the model \emph{during inference} using only unlabeled test data, without access to the original training set.
A representative method, \emph{Tent}, adapts by minimizing prediction entropy through updating only a sparse set of parameters (the affine parameters of normalization layers) to reduce error and preserve efficiency~\cite{wang2020tent}.
Building on this, later methods introduce additional mechanisms to improve robustness and the stability of adaptation in realistic streaming conditions.
For example, \emph{EATA} reduces catastrophic forgetting and improves sample efficiency through selective updates and regularization~\cite{niu2022efficient}.
\emph{SAR} improves robustness for small batch sizes and on mixed shifts by filtering unreliable updates and favoring stable optimization trajectories~\cite{niu2023towards}.
\emph{DeYO} argues that entropy can be an unreliable measure of uncertainty and proposes filtering and reweighting to mitigate the accumulation of noisy updates~\cite{lee2024entropy}.
\emph{COME} further models uncertainty (via the Dirichlet distribution) to reduce overconfidence~\cite{zhang2024cometesttimeadaptionconservatively}.

Although TTA methods reduce error, they have overlooked potential side effects of their updates.
We highlight for the first time the issue of \emph{privacy}.
Test data can contain sensitive information (e.g., medical images, faces, location data), and TTA incorporates this information in parameter updates.
Once updated, the adapted model or its outputs may be inspected, queried, or shared, potentially leaking information about specific test data.
Indeed, there is extensive evidence that learned models can leak information about data they were optimized on \emph{during training}, enabling attacks such as membership inference \cite{shokri2017membershipinferenceattacksmachine,carlini2022membership} or reconstruction~\cite{balle2022reconstructing,nasr2023scalable,carlini2021extracting}, including from model updates~\cite{yin2021see}.
In short, TTA raises the same privacy risks \emph{during testing}, and so far these new risks have gone unnoticed by existing work on test-time updates.

\emph{Differential Privacy} (DP) offers a principled way to bound leakage through models and their outputs, preventing membership and reconstruction attacks.
The standard workhorse in DP deep learning is \emph{DP-SGD}, which clips per-example gradients and injects calibrated Gaussian noise, enabling privacy accounting over the optimization procedure~\cite{abadi2016deep}.
However, the direct application of DP-SGD to TTA is non-trivial.
TTA is executed at small batch sizes (including a batch size of one),
and many effective TTA methods rely on \emph{data-dependent filtering} (e.g., entropy-/gradient-/consistency-based selection of updates) and \emph{dynamic reweighting}, which change the effective updates across steps. 
Naïve implementations of these parts can impair stability and privacy.

We adapt the per-example gradient clipping and noise injection of DP-SGD to TTA methods to make their updates private---including \textbf{Tent}, \textbf{EATA}, \textbf{SAR}, \textbf{DeYO}, and \textbf{COME}---while preserving most of their specifics and accuracy-improving refinements.
We introduce adjustments to privacy accounting and algorithms to make common TTA ingredients respect DP, including selective updates, streaming evaluation, and conservative uncertainty modeling.
Our framework provides a shared lens for studying the \emph{privacy/accuracy trade-off}: how much accuracy from adaptation can be retained under a given privacy budget, and which TTA design choices are more DP-friendly in practice.
Surprisingly, we find that per-sample gradient clipping alone is an efficient mechanism to improve performance in TTA methods, leading to a {\em $0.1\%$-4.1$\%$ increase across methods}, showing that clipping is useful in the no/low privacy regimes.
Increasing DP noise then improves privacy guarantees, at the cost of lower accuracy.
In summary, we make the following contributions:
\begin{itemize}[leftmargin=*,nolistsep,noitemsep,topsep=0em]
  \item \textbf{DP-TTA:} We motivate, develop, and analyze DP versions of TTA methods to protect the privacy of test data. We build on DP-SGD, and adapt it to address practical issues when combining DP with specific TTA mechanisms.
  \item \textbf{Per-sample clipping for TTA:} We show that per-sample clipping is a general mechanism that improves performance across TTA methods.
  This clipping, crucial to privacy analysis, is of independent interest for TTA.
  \item \textbf{Empirical study of the privacy/accuracy trade-off:} We evaluate the impact of DP designs on the privacy/accuracy trade-off of TTA methods, and show that the cost of privacy is reasonable in the episodic and continual settings, especially due to improvements from per-sample clipping.
\end{itemize}

\section{Preliminaries and Related Work}

As the first to study private test-time adaptation, we review the test-time updates and differential privacy mechanisms necessary for our extensions and experiments.

\subsection{Test-Time Adaptation}
\label{sec:prelim_tta}
We consider a model $f_{\theta}:\mathcal{X}\rightarrow\mathcal{Y}$ pretrained on labeled source data
$\mathcal{D}_{\mathrm{tr}}=\{(x_i,y_i)\}$, resulting in parameters $\theta_0$.
During deployment, inputs are drawn from a shifted test distribution and arrive as continual stream $\{x_i\}$ that are batched in a sequence $\{B_t\}_{t=1}^{T}$.
Distribution shifts either happen during known episodes, or at unknown points of the data stream.
Test-time adaptation (TTA) updates model parameters using test inputs (without labels) only, aiming
to improve performance under distribution shift.

Most TTA methods rely on gradient optimization to minimize a test-time loss over batches and adapt the model parameters (in whole or in part) as $\theta_{t+1}=\theta_t-\eta \Delta_t$ with
\begin{equation}
\label{eq:generic_tta}
    \ \Delta_t = \frac{1}{|B_t|}\sum_{\mathbf{x}_i \in B_t} \mathbf{w}_t^{i} \mathbf{g}_t(\mathbf{x}_i),
    \mathbf{g}_t(\mathbf{x}_i)=\nabla_\theta \ell_{\text{tta}}(\mathbf{x}_i, \theta_t), 
\end{equation}
where $\mathbf{g}_t(\mathbf{x}_i)$ is the per-sample gradient, $\eta$ is the adaptation learning rate, $\ell_{\text{tta}}$ is an unsupervised/self-supervised
loss (e.g., entropy minimization, consistency across transformations, or an auxiliary task), and $\mathbf{w}_t^{i}\in[0,1]$ applies a data-dependent weight including filtering ($\mathbf{w}_t \in \{0, 1\}$).
To prevent overfitting and improve the stability of updates, TTA often adapts only a subset of the model parameters~\cite{wang2020tent,vray2025reservoirtta}, such as the affine
parameters of normalization layers (e.g., BatchNorm \cite{ioffe2015batch}/GroupNorm \cite{wu2018group}/LayerNorm \cite{ba2016layer}), while keeping the remaining parameters fixed.
Moreover, many TTA methods explicitly regularize updates---for example by selecting/filtering samples by estimating their reliability~\cite{niu2022efficient,niu2023towards}---to prevent degenerate solutions like model collapse (when a single prediction is made with high confidence on all data).

\subsection{Differential Privacy and DP-SGD}
\label{sec:prelim_dp}
Differential privacy (DP) \cite{dwork2006calibrating} rigorously quantifies and bounds the privacy loss of individual samples through a computation, such as training a machine learning model.
Intuitively, DP provides a privacy guarantee by ensuring that the output distribution of an algorithm is not sensitive to a change in any single sample.
Formally:
\begin{definition}[$(\epsilon,\delta)$-Differential Privacy \cite{dwork2006our}]
\label{def:approximate_dp}
    A randomized algorithm $\mathcal{A}$ satisfies $(\epsilon,\delta)$-DP if, for all pairs of neighbouring dataset $(D,D')$ differing in one data point, for any measurable set of outputs $\mathcal{S}$, 
    \[
    \Pr[\mathcal{A}(D)\in\mathcal{S}]\leq e^\epsilon \Pr[\mathcal{A}(D')\in\mathcal{S}]+\delta,
    \]
    where $\epsilon$ controls the strength of the privacy guarantee and $\delta$ allows for a small probability of failure.
\end{definition}
A common approach to train DP deep learning models is Differentially Private Stochastic Gradient Descent (DP-SGD) \cite{abadi2016deep} that serves as a drop-in replacement for standard SGD.
At each step $t$, DP-SGD clips (rescales) per-sample gradients ($\mathbf{g}_t(\mathbf{x}_i)$) to have $L2$-norm upper-bounded by a threshold $C$ ($\Bar{\mathbf{g}}_t(\mathbf{x}_i)$), and adds Gaussian noise to the aggregated gradient ($\Delta_t$). That is, $\theta_{t+1}=\theta_t-\eta \Delta_t$, with:
\begin{equation}
\label{eq:generic_dp}
\begin{aligned}
    & \Bar{\mathbf{g}}_t(\mathbf{x}_i) = \mathbf{g}_t(\mathbf{x}_i)/\max{\Big(1,\frac{||\mathbf{g}_t(\mathbf{x}_i)||_2}{C}\Big)} \\
    & \Delta^{\text{DP}}_t=\frac{1}{|B_t|} \Big( \sum_{\mathbf{x}_i \in B_t} \Bar{\mathbf{g}}_t(\mathbf{x}_i) + \mathcal{N}(0,C^2\sigma^2\mathbb{I}^d) \Big),
\end{aligned}
\end{equation}
where the threshold $C$ limits the sensitivity of the update to a single sample $\mathbf{x}_i$; the noise multiplier $\sigma$ controls the perturbation by scaling its variance with $C$; and the model has $d$ parameters.
The overall privacy guarantee of DP-SGD is analyzed with the composition, sub-sampling, and post-processing properties of DP \cite{abadi2016deep}.
The cumulative privacy loss across steps is tracked by advanced accounting techniques like Gaussian Differential Privacy (GDP) \cite{dong2022gaussian}.

While we are the first to investigate DP for updating by TTA, existing DP directions relate to updates.
DP fine-tuning \cite{yu2021differentially,de2022unlocking} studies private updates to pre-trained models, in a supervised setting with output labels, on a fixed dataset.
DP bandits \cite{mishra2015nearly} and reinforcement learning \cite{vietri2020private} consider private updates, in a supervised setting with rewards, and methods focus on the exploration and exploitation trade-off.

\subsection{Gradient Clipping}
\label{sec:prelim_clip}
Gradient clipping at the batch level is a common approach to stabilize deep network optimization~\cite{pascanu2013difficulty,brock2021high}, or to handle label noise~\cite{menon2020clipping}.
While previous work  \cite{niu2023towards} has found per-batch gradient clipping ineffective for TTA, we focus on per-sample clipping (Eq. \ref{eq:generic_dp}), as a core component of differential privacy, and show that it can improve TTA methods even outside of DP (\S\ref{sec:clipping-vs-filtering}). 
Although we manually tune our clipping thresholds by cross-validation, as it is simple and effective, it is also possible to adaptively estimate the clipping threshold during optimization~\citet{andrew2022differentiallyprivatelearningadaptive} for more automatic tuning.

\section{Differentially Private Test-Time Adaptation}
\label{sec:method}
We first present a general recipe and analysis to make TTA updates differentially private, using Tent \cite{wang2020tent} as an example, by adopting DP gradient computation (\S\ref{subsec:dp_tta}).
This approach, with customization, 
supports multiple popular and recent TTA methods (\S\ref{subsec:instantiations}) that include EATA \cite{niu2022efficient}, SAR \cite{niu2023towards}, DeYO \cite{lee2024entropy} and COME \cite{zhang2024cometesttimeadaptionconservatively}. 
The customization can involve replacing operations with DP-compatible alternatives, removing incompatible components (\S\ref{subsec:dp_tta}-\S\ref{subsec:instantiations}), and switching to compatible deep learning architectures (\S\ref{subsec:dp_source_model}).
We then show that DP-TTA methods do not lose too much accuracy for their privacy, and variants can even trade privacy for better accuracy than standard TTA methods (\S\ref{sec:experiments}).

\subsection{Differentially Private Test-Time Updates}
\label{subsec:dp_tta}
The key step for integrating DP into TTA is substituting the update $\Delta_t$ (Eq. \ref{eq:generic_tta}) with the private gradient $\Delta^{\text{DP}}_t$ (Eq. \ref{eq:generic_dp}).
We first show the private method of DP-Tent (Alg. \ref{alg:dp_tta}).
Let $f_{\theta_0}$ denotes the source model (either non-private or privately-trained with DP).
DP-Tent has the same entropy minimization loss as Tent $\ell_{\text{tent}}(\mathbf{x}_i, \theta)$, 
\begin{equation}
\label{eq:tent_loss}
H(\mathbf{x}_i, \theta) = -\sum_{c}p(\Hat{y}_c)\log p(\Hat{y}_c), \ \Hat{y}=f_{\theta}(\mathbf{x}_i),
\end{equation}
where $p(\Hat{y}_c)$ is the predicted probability of class $c$. 
As in (Eq. \ref{eq:generic_tta}), we optimize with a batch size $B_t$, with $\mathbf{g}^{\text{tent}}_t(\mathbf{x}_i)=\nabla_\theta\ell_{\text{tent}}(\mathbf{x}_i, \theta_t)$.
DP-Tent computes a private aggregate update (with $\mathbf{w}_t^i = 1$) following (Eq. \ref{eq:generic_dp}), clipping gradients to $\Bar{\mathbf{g}}^{\text{tent}}_t(\mathbf{x}_i)$ then updating with noisy aggregate
\begin{equation}\label{eq:dp-tent}
    \Delta_t^{\text{DP-Tent}} = \frac{1}{|B_t|} \Big( \sum_{\mathbf{x}_i \in B_t} \Bar{\mathbf{g}}^{\text{tent}}_t(\mathbf{x}_i) + \mathcal{N}(0,C^2\sigma^2\mathbb{I}^d) \Big)
\end{equation}
DP-Tent keeps the same parameterization as Tent and only updates the affine parameters of normalization layers $\theta^a \subset \theta$ as $\theta^a_{t+1} \leftarrow \theta^a_{t}-\eta\,\Delta_t^{\text{DP-Tent}}$.

\begin{algorithm}[tb]
\caption{Test-time adaptation with DP-Tent}
\label{alg:dp_tta}
\begin{algorithmic}[1]
\REQUIRE Source model $f_{\theta}$, adaptable parameters $\theta^a$, $L2$-clipping threshold $C$, noise multiplier $\sigma$, learning rate $\eta$, test data $\mathbf{x}_i$ in batch $\{B_k\}_{k=1}^{K}$, 
\FOR{$t \in \{1, \dots T\}$}
  \STATE Calculate predictions $\Hat{y}_i$ and loss $H(\mathbf{x}_i, \theta)$
  \STATE Compute per-example gradients $\mathbf{g}_t(\mathbf{x}_i)$ for each sample $\mathbf{x}_i$
  \STATE Clip each per-example gradient to get $\Bar{\mathbf{g}}_t(\mathbf{x}_i)$
  \STATE Add noise and re-aggregate noisy clipped gradient to get $\Delta_t^{\text{DP-Tent}}$
  \STATE Adapt with $\theta^a_{t+1} \leftarrow \theta^a_{t}-\eta\,\Delta_t^{\text{DP-Tent}}$
\ENDFOR
\STATE \textbf{return} Adapted parameters $\theta_T$
\end{algorithmic}
\end{algorithm}

\paragraph{Privacy analysis.} 
DP-SGD typically relies on analysis based on sub-sampling due to the multi-epoch iteration and mini-batching of training optimization.
TTA optimization is different, as online updates are computed on each batch only once, without resampling.
Using a post-processing argument from DP, we can analyze DP-Tent's sample-level privacy with only a one step analysis.
Another subtlety lies in the neighboring definition of DP (Definition \ref{def:approximate_dp}).
While DP-SGD for training relies on sub-sampling, and the convenient ``leave-one-out'' neighboring definition (DP hides the effect of any sample being included/excluded from training), TTA instead relies on a streaming sequence of test data in fixed-sized batches, where removing a sample is not as natural.
In this case, the ``change one'' neighboring definition is a better fit: DP hides the effect of a sample being replaced by another arbitrary sample.
The ``change one'' neighboring definition is a bit stronger, at the cost of an additional factor $2$ in the DP analysis:
\begin{proposition}[Privacy Guarantee of DP-Tent]\label{prop:privacy}
DP-Tent is $G_\mu$-DP with $\mu=\frac{2}{\sigma}$.

As a result, for all $\epsilon \geq 0$, DP-Tent is also $(\epsilon, \delta)$-DP with:
\[
\delta(\epsilon) = \Phi\big( - \frac{\sigma\epsilon}{2} + \frac{1}{\sigma} \big) - e^{\epsilon} \Phi\big( - \frac{\sigma\epsilon}{2} - \frac{1}{\sigma} \big) ,
\]
where $\Phi$ is the standard Gaussian CDF.
\end{proposition}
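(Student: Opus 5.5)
The plan is to reduce the whole stream of DP-Tent updates to a single Gaussian mechanism applied to the one batch that contains the differing sample. Everything else is then handled by post-processing and adaptive composition in the $f$-DP / GDP framework of \cite{dong2022gaussian}.

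Fix neighbouring streams $D,D'$ under the ``change one'' relation. They differ only in one sample $\mathbf{x}_j$, which lies in a single batch $B_s$ at a fixed position of the stream; the batch sizes are fixed, so the partition into batches is the same in $D$ and $D'$. I will view the output of DP-Tent as the sequence of parameters $(\theta_1,\dots,\theta_T)$, and hence any predictions computed from them.

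\textbf{Steps $t \neq s$.} Each such step is a (possibly randomized) function of $\theta_t$ and of $B_t$, and $B_t$ is identical in $D$ and $D'$. So, conditionally on the past, these steps are a common Markov kernel applied to the previous output, and they contribute $G_\infty$, i.e.\ no privacy loss. Formally, I will write the full mechanism as an adaptive composition in which every mechanism except the $s$-th is data-independent on this pair. Composition of $G_{\mu}$ with perfectly private mechanisms gives $G_\mu$, equivalently a post-processing of $(\theta_1,\dots,\theta_{s+1})$, and the steps before $s$ do not depend on $\mathbf{x}_j$ at all.

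\textbf{Step $s$.} Conditional on $\theta_s$, the update releases $\sum_{i\in B_s}\bar{\mathbf{g}}_s(\mathbf{x}_i)+\mathcal{N}(0,C^2\sigma^2\mathbb{I}^d)$, followed by the deterministic rescaling by $1/|B_s|$ and the step $\theta_s-\eta(\cdot)$, which is post-processing. Replacing $\mathbf{x}_j$ by $\mathbf{x}_j'$ changes the clipped sum by $\bar{\mathbf{g}}_s(\mathbf{x}_j)-\bar{\mathbf{g}}_s(\mathbf{x}_j')$. By the triangle inequality this has $L_2$ norm at most $2C$, and this is the factor $2$ coming from the change-one notion. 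The Gaussian mechanism with sensitivity $\Delta=2C$ and noise standard deviation $C\sigma$ is $G_{\Delta/(C\sigma)}=G_{2/\sigma}$-DP (Dong et al., Thm.~2.7). Since the Gaussian trade-off bound holds for every value of $\theta_s$, and $\theta_s$ is identically distributed under $D$ and $D'$, the step is $G_{2/\sigma}$-DP overall.

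The main subtlety is making this conditioning/adaptivity argument rigorous without invoking subsampling. I would cite the adaptive composition theorem for $f$-DP, treating the $t\neq s$ mechanisms as $G_0$-trivial, i.e.\ trade-off function $\mathrm{Id}$ complement. Finally, the $(\epsilon,\delta)$ conversion is the standard primal--dual identity for $G_\mu$ (Dong et al., Cor.~2.13):
\[
\delta(\epsilon)=\Phi\!\left(-\tfrac{\epsilon}{\mu}+\tfrac{\mu}{2}\right)-e^{\epsilon}\Phi\!\left(-\tfrac{\epsilon}{\mu}-\tfrac{\mu}{2}\right).
\]
Substituting $\mu=2/\sigma$ gives $\epsilon/\mu=\sigma\epsilon/2$ and $\mu/2=1/\sigma$, which is exactly the stated formula.
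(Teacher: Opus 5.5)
Your proposal is correct and follows essentially the same route as the paper's proof. Both bound the change-one sensitivity by $2C$ using clipping and the triangle inequality, apply Theorem~2.7 of Dong et al.\ to get $\mu=2/\sigma$, note that there is no composition because each sample enters only one batch, and convert via Corollary~2.13. You spell out the paper's brief post-processing remark more explicitly, by conditioning on $\theta_s$ and treating the other steps as perfectly private mechanisms in an adaptive composition. One notational slip: the steps $t\neq s$ contribute the trade-off function $G_0(\alpha)=1-\alpha$ (perfect privacy), not $G_\infty$, which is the no-privacy trade-off function; you use $G_0$ correctly later in the proposal.
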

\begin{proof}
Focusing on the sum in (Eq.~\ref{eq:dp-tent}), for neighbouring $D, D'$ that differ in $\mathbf{x}_j, \mathbf{x}'_j$, the $\ell_2$-sensitivity is
$\max_{\mathbf{x}_j, \mathbf{x}'_j} \| \sum_{\mathbf{x}_i \in B_t} \Bar{\mathbf{g}}^{\text{tent}}_t(\mathbf{x}_i) - \sum_{\mathbf{x}_i \in B'_t}$
$\Bar{\mathbf{g}}^{\text{tent}}_t(\mathbf{x}_i) \|_2\allowbreak=\allowbreak \max_{\mathbf{x}_j, \mathbf{x}'_j} \| \Bar{\mathbf{g}}^{\text{tent}}_t(\mathbf{x}_j) - \Bar{\mathbf{g}}^{\text{tent}}_t(\mathbf{x}'_j) \|_2\allowbreak \leq \allowbreak\max_{\mathbf{x}_j, \mathbf{x}'_j} \| \Bar{\mathbf{g}}^{\text{tent}}_t(\mathbf{x}_j) \|_2 + \| \Bar{\mathbf{g}}^{\text{tent}}_t(\mathbf{x}'_j) \|_2 \allowbreak\leq \allowbreak2C$. 
The first equality holds because all elements in any batch are equal, except for $\mathbf{x}_j, \mathbf{x}'_j$, the first inequality by the triangular inequality, and the last inequality is enforced by per-sample gradients clipping in (Alg. \ref{alg:dp_tta}).

By Theorem 2.7 \cite{dong2022gaussian}, this mechanism (one step of DP-Tent) is $\mu$-GDP with $\mu=\frac{2}{\sigma}$. Since each sample is processed only once, and all further computation is post-processing on a DP result, thus remaining at least as DP, there is no privacy composition over adaptation steps.

By Corollary 2.13 of \cite{dong2022gaussian}, DP-Tent is thus $(\epsilon, \delta)$-DP with $\delta(\epsilon) = \Phi\big( - \frac{\epsilon}{\mu} + \frac{\mu}{2} \big) - e^{\epsilon} \Phi\big( - \frac{\epsilon}{\mu} - \frac{\mu}{2} \big)$. Replacing the value of $\mu$ concludes the proof.
\end{proof}

\subsection{Integrating DP with EATA, SAR, DeYO and COME}
\label{subsec:instantiations}
Integrating DP with a TTA method requires making every step of its updates respect DP.
Specifically, each data-dependent computation needs to be inside a DP computation (with controlled sensitivity), or done as a post-processing of a DP measurement.
This way Proposition~\ref{prop:privacy} applies and ensures DP properties.
We alter multiple TTA methods to ensure DP (details in Appendix \ref{appendix:dp_eata_sar_deyo_come}), by replacing steps that query test data to produce intermediate results with DP versions, or removing steps when they are redundant.

{\bf DP-EATA.} EATA makes three additions to Tent: a reweighted loss, a Fisher information regularizer $\mathcal{R}$, and filters on uncertainty and diversity to choose which samples to update on. 
Its Filter-1 measures uncertainty to keep samples with entropy lower than a threshold $H_0$.
Its Filter-2 measures diversity by the cosine similarity of current predictions to an exponential moving average of past predictions.
We use DP updates (\S\ref{subsec:dp_tta}) on the EATA loss.
$\mathcal{R}$ acts only on current parameters (by DP post-processing) and original parameters (which are constant), so this addition does not impact the DP analysis.
Finally, we remove both filters. 
Since filters can affect batch size, and require non-DP statistics, they would be costly to make DP (Appendix \ref{appendix:dp_eata_sar_deyo_come} confirms that DP variants preserving some of the filters have lower accuracy).
For $\lambda \geq 0$, our DP-EATA update is:
\begin{equation}
\label{eq:update_eata_dp}
\begin{aligned}
    & \mathbf{g}_t(\mathbf{x}_i) = \nabla_\theta \Big(e^{H_0-H(\mathbf{x}_i, \theta)} H(\mathbf{x}_i, \theta_t) \Big) \\
    & \Bar{\mathbf{g}}_t(\mathbf{x}_i) = \mathbf{g}_t(\mathbf{x}_i)/\max{\Big(1,\frac{||\mathbf{g}_t(\mathbf{x}_i)||_2}{C}\Big)} \\
    &\Delta_t^{\text{DP-EATA}} = \frac{1}{|B_t|}\Big( \sum_{\mathbf{x}_i \in B_t} \Bar{\mathbf{g}}_t(\mathbf{x}_i) + \mathcal{N}(0,C^2\sigma^2\mathbb{I}^d) \Big) \\
    &\;\;\;\;\;\;\;\;\;\;\;\;\;\;\;+ \lambda \nabla_\theta \mathcal{R}(\theta_t, \theta_0)
\end{aligned}
\end{equation}
$\mathcal{R}$ does not enter per-sample gradient clipping since it does not query test data, and its gradient depends only on the source parameters. Proposition~\ref{prop:privacy} holds for DP-EATA.

{\bf DP-SAR.} SAR also filters points based on an entropy threshold, that we also remove in DP-SAR. The sharpness minimization update in SAR involves accessing $\nabla_\theta H(\mathbf{x}_i, \theta)$ at two different points: a DP version would be costly in terms of privacy loss. We instead use a DP version from \citet{park2023differentially} which calculates the weight perturbation $\Tilde{\epsilon}_t(\theta)$ using the last step's private gradient $\Tilde{\mathbf{g}}_{t-1}$, yielding the update:
\begin{equation}
\label{eq:update_sar_dp}
\begin{aligned}
    & \mathbf{g}_t(\mathbf{x}_i)=\nabla_\theta H(\mathbf{x}_i,\theta)|_{\theta_t+\Tilde{\epsilon}_t(\theta)},\;\Tilde{\epsilon}_t(\theta)=\rho\,\Tilde{\mathbf{g}}_{t-1}/||\Tilde{\mathbf{g}}_{t-1}||_2\\
    & \Bar{\mathbf{g}}_t(\mathbf{x}_i) = \mathbf{g}_t(\mathbf{x}_i)/\max{\Big(1,\frac{||\mathbf{g}_t(\mathbf{x}_i)||_2}{C}\Big)} \\
    & \Delta^{\text{DP-SAR}}_t=\frac{1}{|B_t|} \Big( \sum_{\mathbf{x}_i \in B_t} \Bar{\mathbf{g}}_t(\mathbf{x}_i) + \mathcal{N}(0,C^2\sigma^2\mathbb{I}^d) \Big)
\end{aligned}
\end{equation}
Proposition~\ref{prop:privacy} holds since the per-sample contribution to the update is bounded by $C$.

{\bf DP-DeYO.} DeYO introduces the pseudo-label probability difference (PLPD), computed on test data $\mathbf{x}_i$ and a randomly patch-shuffled version $\mathbf{x}_i'$.
PLPD is used for filtering, which we remove, as we did for entropy filtering.
The update also depends on the PLPD, leading to the following DP-DeYO:
\begin{equation}
\label{eq:update_deyo_dp}
\begin{aligned}
    & \mathbf{g}_t(\mathbf{x}_i) = \nabla_\theta \Big((e^{H_0-H(\mathbf{x}_i,\theta)}+e^{\mathrm{PLPD}_\theta(\mathbf{x}_i,\mathbf{x}_i')}) H(\mathbf{x}_i,\theta_t) \Big) \\
    & \Bar{\mathbf{g}}_t(\mathbf{x}_i) = \mathbf{g}_t(\mathbf{x}_i)/\max{\Big(1,\frac{||\mathbf{g}_t(\mathbf{x}_i)||_2}{C}\Big)} \\
    & \Delta^{\text{DP-DeYO}}_t=\frac{1}{|B_t|} \Big( \sum_{\mathbf{x}_i \in B_t} \Bar{\mathbf{g}}_t(\mathbf{x}_i) + \mathcal{N}(0,C^2\sigma^2\mathbb{I}^d) \Big)
\end{aligned}
\end{equation}
Since PLPD is calculated per-sample and enters the loss function through the clipping operator, the per-sample sensitivity is bounded by $C$, and Proposition~\ref{prop:privacy} holds.

{\bf DP-COME.} COME modifies the entropy loss with a conservative version that models uncertainty. There is no modification on the COME loss to satisfy DP:
\begin{equation}
\begin{aligned}
   & \ell_{\textnormal{COME}}(\mathbf{x}_i,\theta)= -\sum_{k=1}^K b_k \log b_k-u \log u, \\
   &b_k = \frac{\alpha_k -1}{S},\; \alpha_k = e^{f_\theta(\mathbf{x}_i)_k},\; S=\sum_{k=1}^K \alpha_k, \, u=\frac{K}{S}.
\end{aligned}
\end{equation}
The only exception is when it is used to replace $H(\mathbf{x}_i,\theta)$ in other TTA algorithms, e.g. the modifications in the DeYO algorithm to satisfy DP carry over to DP-DeYO-COME.

\subsection{DP-Compatible Architectures}
\label{subsec:dp_source_model}
While DP does not impose general architectural constraints, the privacy analysis of DP-SGD and Proposition~\ref{prop:privacy} relies on per-sample clipping to bound the sensitivity of updates to a change in any individual sample.
This bound requires that any sample only influences its own gradient, ruling out operations which gradients depend on other samples, such as Batch Normalization (BN)~\cite{ioffe2015batch}.
For compatibility with privacy guarantees, we exclude architectures with BN, and instead choose a common architecture with Layer Normalization (LN)~\cite{ba2016layer}: the Vision Transformer (ViT)~\cite{dosovitskiy2021vit}.

\section{Experiments}
\label{sec:experiments}
\subsection{Setup}
\begin{figure*}[htb]
\centering
\includegraphics[width=\textwidth]{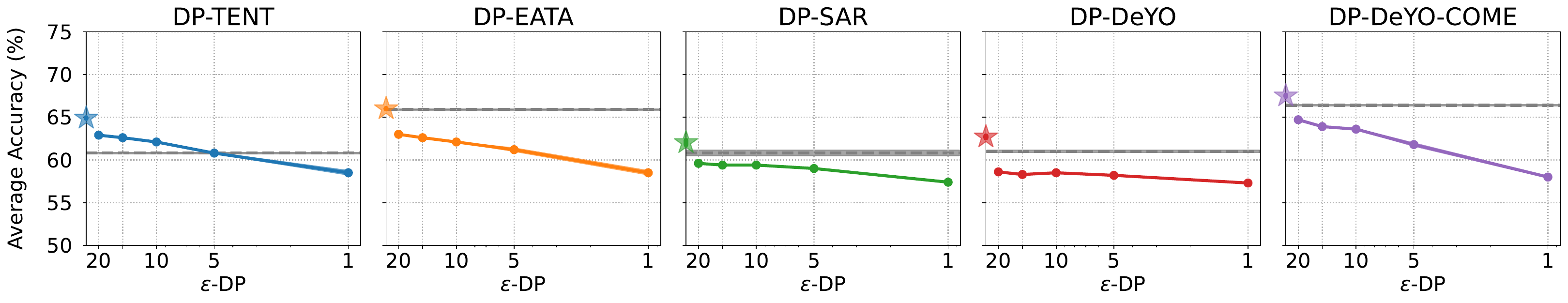}
\caption{%
\textbf{Privacy/accuracy trade-off for DP-TTA methods at different privacy budgets in the continual setting.} The results show average top-1 accuracy (\%) under different privacy budgets $(\varepsilon\text{-DP},\delta=1e^{-6})$ when adapting on the 15 corruption types of ImageNet-C at severity 5 in the continual setting.
The gray dashed line and star ($\star$) are the \textit{non-private baseline} and our \textit{non-private + clip} editions of each method.  
The line shading shows the (small) standard deviation in the accuracies, where each experiment is repeated for 5 times/seeds. 
}
\label{fig:priv_util_continual}
\end{figure*}

We study test-time adaptation (TTA) under distribution shift, using differentially private optimizers. Unless otherwise noted, we follow the default hyperparameters and implementation from each baseline method and only modify the optimizer and customizations in \S\ref{subsec:instantiations} for test-time updates.

{\bf Model.} We use the ViT-Base/16 ~\cite{dosovitskiy2021vit} architecture with the \texttt{vit\_base\_patch16\_224} model parameters from the \texttt{timm} library \cite{rw2019timm} and ConvNeXT\_Tiny ~\cite{liu2022convnet} with the \texttt{convNeXt\_Tiny} model parameters from the \texttt{timm} library.
The models are pre-trained on ImageNet and then adapted at test time following the updates for each method.

{\bf Dataset.} We evaluate on the corruptions of ImageNet-C at severity 5~\cite{hendrycks2019benchmarking} for strong shifts and ImageNet-R ~\cite{hendrycks2021many} for more shifts.
To check accuracy without shift, we also evaluate on the original ImageNet validation set~\cite{russakovsky2015imagenet}.
Additional results on ConvNeXt and ImageNet-R are provided in Appendix~\ref{appendix:more-datasets-models}.

{\bf TTA methods.} We consider representative methods with entropy minimization and sample filtering/reweighting: Tent~\cite{wang2020tent}, EATA~\cite{niu2022efficient}, SAR~\cite{niu2023towards}, DeYO~\cite{lee2024entropy}, and COME~\cite{zhang2024cometesttimeadaptionconservatively}.
Note that all adaptation updates are unsupervised, without use of test labels, and we update the same choice of affine parameters as the original methods.

{\bf Adaptation settings.} We report results in both the episodic and continual settings.
In the episodic setting, adaptation is reset and the model parameters are restored to the training parameters $\theta_0$ between each test shift.
In the continual setting, adaptation proceeds over a sequence of test shifts without resets, capturing long-horizon deployment, potential error accumulation, and the difficulty of detecting shifts.

{\bf Implementation.}
All experiments are implemented with PyTorch \cite{paszke2019pytorchimperativestylehighperformance}, and we use Opacus \cite{opacus} for per-sample gradient clipping.

{\bf Hyperparameter tuning.}
For each algorithm and experiment, we pick the best learning rate in $\{10^{-4}, 5\cdot10^{-4}, 10^{-3}, \ldots, 1\}$.
We tune the $L2$-clipping threshold $C$ in $\{1,5,10,15\}$. 
We fix the batch size to $64$ per the common practice following Tent.
We provide additional hyperparameter sensitivity in Appendix~\ref{appendix:hyperparameter-sensitive-analysis} and batch-size analyses in Appendix~\ref{appendix:batch-size-analysis}.
These results show that DP-TTA is reasonably robust to hyperparameter choices, transfers effectively when tuning on ImageNet-R, and remains stable across batch sizes, with a trade-off between DP noise and adaptation speed.
For each setting, we repeat the experiment with the best hyperparameters over five random seeds and report the standard deviation in the corresponding tables. 
We provide a detailed variance analysis in Appendix~\ref{appendix:variance-analysis}.

\subsection{Privacy/Accuracy trade-off}
\label{subsec:priv-utility-tradeoff}

We evaluate our DP-TTA algorithms (\S\ref{subsec:dp_tta}-\ref{subsec:instantiations}) at five noise levels $\sigma\in\{8.594, 1.966, 1.084, 0.777,0.619\}$. 
For $\delta=10^{-6}$ these correspond to $\varepsilon =1,5,10,15,20$ privacy guarantees respectively. 
Smaller $\varepsilon$ indicates a stronger privacy guarantee, while $\varepsilon=\infty$ indicate non-private adaptation by the original Tent, EATA, SAR, DeYO and DeYO-COME methods.

Providing privacy guarantees often incurs a drop in model performance, as widely observed in differentially private training for deep learning applications \cite{abadi2016deep}. 
Figure~\ref{fig:priv_util_continual} shows the privacy/accuracy trade-off for DP-TTA updates in the continual adaptation setting, where the gray line in each plot shows the accuracy of the non-private method.
We observe a clear privacy-accuracy trade-off across different privacy budgets, where stronger privacy guarantees (lower $\varepsilon$s) lead to lower accuracies.

Surprisingly, DP-TTA methods can achieve comparable or even better adaptation at moderate privacy levels relative to their non-private baselines. 
For example, DP-Tent has an an average accuracy of $62.9\%, 62.6\%$ and $62.1\%$ at $\varepsilon=20, 15, 10$ respectively, outperforming the non-private baseline at $60.8\%$. Even at the stronger privacy level $\varepsilon=1$, the accuracy only degrades by $2.3\%$. 
This suggests that DP-Tent can provide meaningful privacy guarantees without substantially sacrificing adaptation performance.
Non-private versions outperform the private versions for DP-EATA, DP-SAR, DP-DeYO and DP-DeYO-COME, though differences are only $2.9\%$, $1.2\%$, $2.4\%$ and $1.7\%$ respectively at $\varepsilon=20$.
We further examine the accuracy-privacy trade-off for each corruption type in the Appendix Sec. \ref{appendix:priv_util_tradeoff_appendix}, for the continual setting in Table~\ref{tab:privacy_accuracy_tradeoff_eps}, and for the episodic setting in Table~\ref{tab:privacy_accuracy_tradeoff_eps_episodic}, and for the uncorrupted/clean data without shift in Table~\ref{tab:clean_acc_dp_sanity}.

These improvements hint at a DP mechanism that could be independently useful for adaptation.
A closer investigation attributes the improvement to per-sample gradient clipping, as we examine in the next section.

\subsection{Per-Sample Clipping Improves Adaptation}
\label{sec:clipping-vs-filtering}
We apply per-sample clipping to EATA, SAR, DeYO, and COME, preserving the sample filters, and without adding DP noise to guarantee privacy.
The stars ($\star$) in Figure~\ref{fig:priv_util_continual} marks the accuracy of clipping alone and show consistent improvement.
Details are in Table~\ref{tab:imagenetc_s5_continual}.

\paragraph{Main Results.} 
The improved accuracy (Figure~\ref{fig:priv_util_continual}) comes from improvements for most corruption types (Table~\ref{tab:imagenetc_s5_continual}) under per-sample gradient clipping, increasing the average improvement of adaptation from $0.1\%$ to $4.1\%$ across TTA methods.
DeYO-COME with clipping even achieves the highest accuracy in the continual setting with its average accuracy of $67.5\%$. 
The consistent gains due to per-sample gradient clipping suggest it may more generally improve optimization for TTA methods.
The results across ImageNet-R with ViT and ImageNet-C with ConvNeXt-Tiny show trends consistent with the main results. The gains from clipping are substantial in some settings, reaching up to $14\%$ on ImageNet-R (Table~\ref{tab:inr_vit_newnew}) and $1\%$--$5\%$ for Tent and EATA under ConvNeXt continual and episodic adaptation (Tables~\ref{tab:inc_convnext_newnew_continual} and~\ref{tab:inc_convnext_newnew_episodc}).
\begin{table*}[htb]
\centering
\caption{\textbf{Per-sample gradient clipping improves adaptation performance across multiple TTA methods.} We add per-sample gradient clipping to each TTA method and compare with its original version and hyperparameters. The results are top-1 accuracy (\%) when adapting on ImageNet-C at severity level 5 under a continual setting. The bold value signifies the top-performing result across all rows. The shaded row indicate the \%improvements after adding per-sample gradient clipping to the TTA method.}
\label{tab:imagenetc_s5_continual}

\resizebox{\textwidth}{!}{%
\begin{tabular}{l|ccc|cccc|cccc|cccc|c}
\toprule
\multirow{2}{*}{\textbf{Method}} &
\multicolumn{3}{c|}{\textbf{Noise}} &
\multicolumn{4}{c|}{\textbf{Blur}} &
\multicolumn{4}{c|}{\textbf{Weather}} &
\multicolumn{4}{c|}{\textbf{Digital}} &
\multirow{2}{*}{\textbf{Avg.}} \\
\cline{2-16}
& Gauss. & Shot & Impul.
& Defoc. & Glass & Motion & Zoom
& Snow & Frost & Fog & Brit.
& Contr. & Elastic & Pixel & JPEG
& \\
\midrule
Source ($f_{\theta_0}$) & 53.9 & 53.3 & 54.1 & 49.6 & 32.3 & 52.3 & 45.4 & 60.2 & 62.4 & 65.8 & 77.3 & 36.6 & 45.2 & 67.3 & 69.3 & 55.0 {\scriptsize (0.02) }\\ 
$\bullet$ Tent & 55.8 & 58.3 & 59.3 & 53.0 & 46.7 & 59.1 & 53.2 & 63.0 & 61.4 & 66.8 & 78.1 & 64.3 & 53.1 & 69.4 & 70.3 & 60.8 {\scriptsize (0.01) }\\
$\bullet$ Tent + clip & 60.9 & 63.4 & 63.4 & 56.7 & 56.9 & 62.8 & 59.8 & 66.2 & 66.1 & 71.1 & 78.1 & 62.9 & 62.8 & 71.7 & 70.8 & 64.9 {\scriptsize (0.05) }\\
\rowcolor{gray!15}
 &$\triangle$5.1& $\triangle$5.1& $\triangle$4.1& $\triangle$3.7& $\triangle$10.2& $\triangle$3.7& $\triangle$6.6& $\triangle$3.2& $\triangle$4.7& $\triangle$4.3& $\triangle$0.0& $\triangle$-1.4& $\triangle$9.7& $\triangle$2.3& $\triangle$0.5& $\triangle$4.1 \\
$\bullet$ EATA & 59.3 & 62.9 & 63.3 & 58.6 & 57.9 & 64.0 & 62.6 & 67.7 & 67.2 & 72.0 & 79.3 & 62.0 & 66.5 & 72.6 & 72.6 & 65.9 {\scriptsize (0.07) }\\
$\bullet$ EATA + clip  & 60.3 & 63.7 & 63.6 & 57.0 & 58.2 & 63.7 & 62.6 & 67.9 & 67.1 & 72.0 & 78.4 & 64.3 & 66.8 & 73.1 & 71.5 & 66.0 {\scriptsize (0.04) }\\
\rowcolor{gray!15}
 &$\triangle$1.0& $\triangle$0.8& $\triangle$0.3& $\triangle$-1.6& $\triangle$0.3& $\triangle$-0.3& $\triangle$0.0& $\triangle$0.2& $\triangle$-0.1& $\triangle$0.0& $\triangle$-0.9& $\triangle$2.3& $\triangle$0.3& $\triangle$0.5& $\triangle$-1.1& $\triangle$0.1 \\
$\bullet$ SAR & 56.9 & 58.8 & 58.2 & 47.3 & 47.0 & 61.8 & 60.0 & 51.3 & 65.4 & 70.0 & 77.0 & 58.9 & 59.1 & 71.2 & 69.2 & 60.8 {\scriptsize (0.34) }\\
$\bullet$ SAR + clip  & 56.3 & 58.6 & 58.9 & 50.9 & 53.8 & 58.6 & 58.4 & 61.8 & 62.6 & 68.0 & 76.0 & 59.6 & 62.9 & 70.6 & 68.6 & 61.7 {\scriptsize (0.26) }\\
\rowcolor{gray!15}
 &$\triangle$-0.6& $\triangle$-0.2& $\triangle$0.7& $\triangle$3.6& $\triangle$6.8& $\triangle$-3.2& $\triangle$-1.6& $\triangle$10.5& $\triangle$-2.8& $\triangle$-2.0& $\triangle$-1.0& $\triangle$0.7& $\triangle$3.8& $\triangle$-0.6& $\triangle$-0.6& $\triangle$0.9 \\
$\bullet$ DeYO & 57.1 & 59.0 & 59.1 & 53.6 & 52.1 & 58.2 & 54.4 & 62.4 & 61.8 & 65.5 & 76.6 & 60.1 & 58.9 & 68.1 & 68.5 & 61.0 {\scriptsize (0.12) }\\
$\bullet$ DeYO + clip  & 56.2 & 59.1 & 58.7 & 51.9 & 54.7 & 60.1 & 57.5 & 64.5 & 64.1 & 69.7 & 76.9 & 60.7 & 65.5 & 71.4 & 69.0 & 62.7 {\scriptsize (0.36) }\\
\rowcolor{gray!15}
 &$\triangle$-0.9& $\triangle$0.1& $\triangle$-0.4& $\triangle$-1.7& $\triangle$2.6& $\triangle$1.9& $\triangle$3.1& $\triangle$2.1& $\triangle$2.3& $\triangle$4.2& $\triangle$0.3& $\triangle$0.6& $\triangle$6.6& $\triangle$3.3& $\triangle$0.5& $\triangle$1.7 \\
$\bullet$ DeYO-COME & 61.7 & 63.7 & 63.7 & {\bf58.7} & 58.2 & 63.9 & 60.1 & 68.3 & 67.2 & 72.6 & {\bf79.0} & 66.3 & 67.7 & 73.4 & 71.7 & 66.4 {\scriptsize (0.10) }\\
$\bullet$ DeYO-COME + clip  & {\bf62.0} & {\bf64.2} & {\bf63.8} & 58.5 & {\bf59.1} & {\bf65.1} & {\bf63.9} & {\bf70.0} & {\bf68.6} & {\bf74.0} & 78.9 & {\bf66.9} & {\bf70.3} & {\bf74.6} & {\bf72.1} & {\bf67.5} {\scriptsize (0.08) }\\
\rowcolor{gray!15}
 &$\triangle$0.3& $\triangle$0.5& $\triangle$0.1& $\triangle$-0.2& $\triangle$0.9& $\triangle$1.2& $\triangle$3.8& $\triangle$1.7& $\triangle$1.4& $\triangle$1.4& $\triangle$-0.1& $\triangle$0.6& $\triangle$2.6& $\triangle$1.2& $\triangle$0.4& $\triangle$1.1 \\

\bottomrule
\end{tabular}%
}
\end{table*}

The effect of clipping is consistent with but previously undiscovered by prior work highlighting the effect of samples with high loss or large gradients~\citep{niu2022efficient, niu2023towards, zhang2024cometesttimeadaptionconservatively}. 
Per-sample gradient clipping provides a general mechanism for controlling the impact of any one sample on updates that is independent of the choice of loss or parameterization.
Specifically, any large gradient with norm larger than $C$, i.e. $||\mathbf{g}_t(x_i)||> C$, is rescaled to have norm $C$, preserving its direction while reducing its magnitude. 
To analyze the effect of clipping, we compare Tent augmented with per-sample gradient clipping against existing alternatives for mitigating harmful updates in Table~\ref{tab:per_sample_clip_compare_reset0}.

\begin{table*}[htb]
\caption{\textbf{Per-sample clipping has the highest average accuracy over other methods of handling difficult samples.} We report the average top-1 accuracy (and the standard deviation over 5 repeated runs) over 15 common corruption types, when adapting Imagenet-C (level 5) with ViT in the continual setting. The threshold $E_0$ used for filtering entropy loss is in the scale of $\log(10^3)$. \%clip/filter indicates the percentage of samples/batch clipped or filtered, averaged over all corruption types. Bold value indicates the best result across rows.}
\centering
\resizebox{\textwidth}{!}{%
\begin{tabular}{lcc|ccc|cccc|cccc|cccc|c}
\toprule
\multirow{2}{*}{\textbf{Method}} &
\multirow{2}{*}{\begin{tabular}[c]{@{}c@{}}\textbf{Thre.}\\ $(C,H_0)$\end{tabular}} &
\multirow{2}{*}{\begin{tabular}[c]{@{}c@{}}\textbf{\%clip/}\\ \textbf{filter}\end{tabular}} &
\multicolumn{3}{c|}{\textbf{Noise}} &
\multicolumn{4}{c|}{\textbf{Blur}} &
\multicolumn{4}{c|}{\textbf{Weather}} &
\multicolumn{4}{c|}{\textbf{Digital}} &
\multirow{2}{*}{\textbf{Avg.}} \\
\cline{4-18}
& & & Gauss. & Shot & Impul.
& Defoc. & Glass & Motion & Zoom
& Snow & Frost & Fog & Brit.
& Contr. & Elastic & Pixel & JPEG
& \\
\midrule
Source ($f_{\theta_0}$) & / & /& 53.9 & 53.3 & 54.1 & 49.6 & 32.3 & 52.3 & 45.4 & 60.2 & 62.4 & 65.8 & 77.3 & 36.6 & 45.2 & 67.3 & 69.3 & 55.0 {\scriptsize (0.02) }\\
Tent  & /& /& 52.4 & 56.3 & 58.8 & 49.6 & 51.7 & 57.4 & 53.6 & 54.5 & 60.8 & 68.6 & 77.5 & 64.9 & 52.4 & 69.2 & 68.5 & 59.7  {\scriptsize (0.01)}\\
+ Per-sample clip  & 1 & 100\% & {\bf60.9} & {\bf63.4} & {\bf63.4} & {\bf56.7} & {\bf56.9} & {\bf62.8} & {\bf59.8} & {\bf66.2} & {\bf66.1} & 71.1 & 78.1 & 62.9 & {\bf62.8} & 71.7 & 70.8 & {\bf64.9} {\scriptsize (0.16)}\\
& 5 & 35.9\% &{\bf60.9} & 63.3 & {\bf63.4} & 55.7 & 55.6 & 62.1 & 59.2 & 65.7 & 65.4 & 70.1 & 78.2 & 62.8 & 61.5 & {\bf72.1} & 70.8 & 64.5 {\scriptsize (0.11)}\\
& 10 & 21.0\% & 60.6 & 63.0 & 62.9 & 54.7 & 54.7 & 61.7 & 57.6 & 64.9 & 65.0 & 70.4 & 78.1 & 62.7 & 60.8 & 71.8 & 70.5 & 64.0 {\scriptsize (0.10)}\\
\midrule
+ Batch-level clip & 1 & 84.4\% & 54.9 & 57.5 & 58.6 & 51.3 & 43.2 & 58.4 & 52.2 & 62.6 & 61.3 & 65.2 & 78.0 & 61.6 & 51.7 & 69.6 & 70.3 & 59.8 {\scriptsize (0.01)}\\
& 5 & 1.7\% & 55.1 & 57.5 & 58.4 & 51.8 & 42.9 & 58.4 & 51.4 & 62.0 & 60.5 & 64.0 & 77.8 & 60.5 & 50.3 & 68.4 & 69.6 & 59.2 {\scriptsize (0.02)}\\
& 10 & 1.5\% & 54.4 & 55.6 & 56.3 & 51.0 & 38.3 & 56.5 & 49.8 & 60.7 & 58.7 & 63.1 & 77.5 & 56.3 & 47.7 & 67.0 & 69.3 & 57.5 {\scriptsize (0.05)}\\
\midrule
+ Filter by grad norm  & 1 & 100\% & 53.9 & 53.3 & 54.1 & 49.6 & 32.3 & 52.3 & 45.4 & 60.2 & 62.4 & 65.8 & 77.4 & 36.6 & 45.2 & 67.3 & 69.3 & 55.0 {\scriptsize (0.003)}\\
& 5 & 35.9\% & 59.6 & 62.8 & 63.0 & 55.2 & 53.8 & 61.3 & 58.0 & 64.9 & 64.6 & 69.6 & 78.3 & 62.8 & 60.1 & 71.3 & 70.4 & 63.7 {\scriptsize (0.07)}\\
& 10 & 21.0\% & 60.8 & 63.3 & 63.3 & 55.9 & 54.4 & 62.0 & {\bf59.8} & 65.6 & 65.5 & 70.5 & 78.3 & 62.0 & 62.1 & 71.5 & 71.0 & 64.4 {\scriptsize (0.18)}\\
\midrule
+ Filter by loss  & 0.05 & 69.3\% & 53.9 & 53.3 & 54.1 & 49.6 & 32.3 & 52.0 & 45.1 & 64.1 & 65.9 & {\bf71.5} & {\bf78.9} & {\bf65.5} & 61.5 & {\bf72.1} & {\bf71.8} & 59.4 {\scriptsize (0.45)}\\
& 0.1 & 40.4\% & 55.3 & 59.8 & 61.2 & 55.6 & 52.4 & 61.1 & 56.9 & 64.9 & 63.1 & 69.4 & 78.4 & 65.4 & 59.0 & 70.5 & 70.9 & 62.9 {\scriptsize (0.03)}\\
& 0.3 & 8.2\% & 57.7 & 60.7 & 61.8 & 54.8 & 52.7 & 60.9 & 56.0 & 63.0 & 61.7 & 69.2 & 78.1 & 64.4 & 56.0 & 70.3 & 70.0 & 62.5 {\scriptsize (0.10)}\\
\midrule
+ Filter by both, $\cap$ &5,\,0.1 & 21.6\% & 58.7 & 62.6 & 63.0 & 55.5 & 54.2 & 61.5 & 58.3 & 65.1 & 65.2 & 70.6 & 78.7 & 63.8 & 60.0 & 71.7 & 70.9 & 64.0 {\scriptsize (0.02)}\\
+ Filter by both, $\cup$ &5,\,0.1 & 41.1\% & 56.5 & 62.2 & 63.1 & 55.5 & 54.9 & 61.5 & 58.6 & 65.3 & 64.8 & 70.2 & 78.7 & 62.7 & 60.7 & 71.7 & 70.7 & 63.8 {\scriptsize (0.05)}\\
\bottomrule
\end{tabular}%
}
\label{tab:per_sample_clip_compare_reset0}
\end{table*}

\paragraph{Per-sample clipping has a different effect than batch-level clipping.}
As mentioned in \S\ref{sec:prelim_clip}, batch-level clipping is a known method to stabilize training, which constrains the magnitude of the aggregated update by rescaling it when its norm exceeds a threshold $C$: $\theta_t \leftarrow \theta_{t-1}-\eta\ \textnormal{clip}(\Delta_t, C)$, with $\textnormal{clip}(\Delta_t, C) =\Delta_t/\max\Big(1, \frac{||\Delta_t||_2}{C}\Big)$.

Table \ref{tab:per_sample_clip_compare_reset0} examines Tent with batch-level gradient clipping (``Batch-level clip''). 
While batch-level clipping controls the aggregated update, it does not control the impact of each gradient in the batch.
A single sample with an extremely large gradient can still dominate the direction of the update, while the contributions of other samples are suppressed. 
In contrast, per-sample gradient clipping independently limits the contribution of each individual sample prior to aggregation, and enforces sample-wise control in the update.
The type of clipping matters: batch-level clipping results in lower accuracy than other variants, including the original Tent method.
The accuracy stays around $57-59\%$ across thresholds $C$, which range from clipping $84\%$ of batches to just $1.5\%$ of batches.
This confirms that batch-level clipping is not effective for TTA in agreement with~\citet{niu2023towards}.

\paragraph{Clipping is more effective than excluding large gradients.}
While clipping bounds the effect of large gradients by scaling, filtering removes large gradients entirely, and retains samples that satisfy $\mathbbm{1}(\mathbf{g}_t(x_i)\leq C)$. 
Comparing ``Per-sample clip'' and ``Filter by grad norm'' in Table~\ref{tab:per_sample_clip_compare_reset0} shows that the re-scaling of per-sample gradient clipping is more effective than the hard inclusion/exclusion of filtering by gradient norm.
Comparing clipping and filtering at the same threshold $C$, filtering can exclude a substantial number of samples. 
For example, filtering with $C=1$ excludes all samples from updates, which discard both helpful and harmful samples alike and lead to the same adaptation accuracy as the source model.
In contrast, per-sample clipping with $C=1$ continues to learn from these gradients (with reduced influence) and achieves high accuracy. 
This leads to improved adaptation as each sample is guaranteed to have only a bounded effect on the update while still contributing.

\begin{figure*}[t]
\centering
\includegraphics[width=\textwidth]{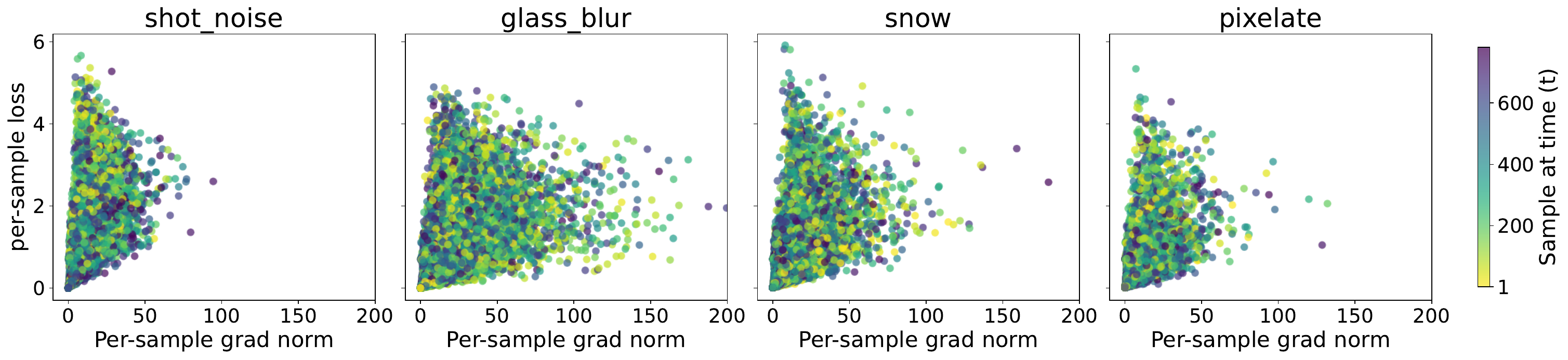}
\caption{\textbf{Weak alignment between the per-sample entropy loss and gradient norm for various corruption types.} We show the scatter plot between per-sample entropy loss and gradient norm. Each point is one test sample. The colors indicate test samples received at adaptation time $t$.}
\label{fig:per_sample_clip_and_loss}
\end{figure*}

\paragraph{Gradient norm and entropy loss filter samples differently.}
Filtering the loss by only keeping samples with $\mathbbm{1}(H(x;\theta)\leq H_0)$ is a popular method for limiting updates~\cite{niu2022efficient,niu2023towards,zhang2024cometesttimeadaptionconservatively}
``Filter by Loss'' in Table~\ref{tab:per_sample_clip_compare_reset0} shows the performance of Tent with this filtering.
Accuracy gains are consistently smaller than with per-sample gradient clipping.
Sweeping the threshold to increase the filtering rate from $8\%$ to $40\%$ yields moderate improvements, but excluding an even more ($69\%$) does not lead to further gains. 
In contrast, altering a comparable proportion by per-sample clipping at $C=1$ achieves higher accuracy without discarding samples.

While per-sample entropy loss is often correlated with per-sample gradient norm, the two quantities capture different notions of sample difficulty: higher entropy loss indicates higher prediction uncertainty and higher gradient norm indicates larger changes to the parameters.
Figure~\ref{fig:per_sample_clip_and_loss} shows scatterplots of per-sample entropy loss versus gradient norm across 4 corruption types (chosen as those with the largest change in performance $\Delta$ in each category in Table~\ref{tab:imagenetc_s5_continual}). 
The colors indicate the temporal order of samples during continual adaptation, with lighter colors corresponding to samples seen earlier during adaptation.
We observe a weak positive correlation between entropy loss and gradient norm, with no obvious pattern over time. 
These plots suggest that the entropy loss and gradient norm identify different sets of samples to filter.

The results in Table~\ref{tab:per_sample_clip_compare_reset0} (``Filter by grad norm'', ``Filter by loss'', ``Filter by both'') are consistent with the scatterplots. 
Filtering by both shows improves adaptation when applied as their intersection or union with each at their best threshold. 
The different percentages of filtered samples reveals that the filters select relatively independent sets of samples. 
Applying the union or intersection of both filters improves more than either filter alone.
Although more effective than unfiltered updates, the hard inclusion/exclusion of the filters is less effective than per-sample gradient clipping.

\begin{table}[t]
\centering
\small
\caption{%
\textbf{Ablation study of update components} scored by top-1 accuracy (\%) on ImageNet-C (severity=5). 
Each result is a separate configuration with equal tuning over learning rates.
}
\label{tab:ablation_components}
\resizebox{0.9\columnwidth}{!}{%
\begin{tabular}{lcccccc}
\toprule
\textbf{Method} & \textbf{Episodic} & \textbf{Continual} \\
\midrule
Source / No Update & 55.0 & 55.0\\
\midrule 
Tent & 63.0 & 60.8 \\
+ Clip (Ours) & 68.0 & 64.9 \\
+ Regularizer & 64.5 & 62.9\\
+ Filter-1 & 66.4 & 63.1\\
+ Filter-2 & 66.5 & 65.8\\
+ Filter-1 + Filter-2 & 65.5 & 63.7\\ 
\midrule
+ SAM & 62.5 & 60.7 \\
+ SAM + Filter-1 & 63.2 & 60.9 \\
+ SAM + Filter-3 & 63.0 & 60.0 \\
+ SAM + Filter-1 + Filter-3 & 63.0 & 60.2 \\
\midrule
+ Adam & 57.4 & 62.2 \\
+ Adam + Clip & 63.4 & 64.5 \\
\bottomrule
\end{tabular}
}
\end{table}

\paragraph{Ablations.}
To analyze the role of update components, either in isolation or in combination, we compare their effect on Tent in both the episodic and continual settings in Table~\ref{tab:ablation_components}.
We choose Tent for this analysis as it is the minimal method with only its simple entropy minimization loss.
From EATA, we check Filter-1 that discards high-entropy samples, Filter-2 that discards samples whose updates are similar to past updates, and its regularizer for penalizing differences from the source model. 
From SAR, we check its sharpness-aware minimization (SAM) update, and its Filter-3 that discards remaining high-entropy samples post-update.
All hyperparameters follow the best configuration from the paper for each method, except for the learning rate, which we re-tune separately for the episodic and continual settings.
This achieved the highest accuracy for each ablation tested.

For the components of EATA (Regularizer, Filter-1, Filter-2), the episodic and continual setting differ.
In the episodic setting, any one component does not improve much.
In the continual setting, each component helps.
Its combined filters help in both settings, improving on clipping in the episodic case, and rivaling clipping in the continual case.

For the components of SAR, the SAM updates makes the larger difference, while its filters only marginally improve by ${\sim}1$ point.
Nonetheless, these SAR variants do not reach the accuracy of Tent + clipping.

As an alternative to per-sample gradient clipping, an adaptive optimizer such as Adam~\cite{kingma2015adam} re-scales updates via its estimates of gradient statistics.
We check Adam without clipping, to see if its re-scaling delivers comparable improvement, and check Adam with clipping, to see if re-scaling by Adam and re-scaling by clipping are complementary.
Adam does not achieve the same improvement as clipping for Tent.
Furthermore, adding per-sample clipping to Adam yields a clear improvement, so re-scaling by clipping makes its own contribution beyond re-scaling by adaptive optimization.
With per-sample clipping, not only the update magnitude but the update direction can change, because different samples are scaled differently. 

\begin{table}[t]
\centering
\caption{%
\textbf{Compute time} for non-private TTA vs. DP-TTA (average wall-clock latency per batch, ms).
Measurements are taken on L40s, batch size 64, averaged over 3 runs.
}
\label{tab:compute_time_min}
\resizebox{0.95\columnwidth}{!}{%
\begin{tabular}{lccc}
\toprule
\textbf{Method} & \textbf{Non-DP (ms)} & \textbf{DP (ms)} & \textbf{Clip-Only (ms)} \\
\midrule
Tent & 174 & 189 & 189 \\
EATA & 178 & 193 & 195 \\
SAR  & 340 & 362 & 342 \\
DeYO & 246 & 287 & 264 \\
DeYO-COME & 230 & 294 & 255 \\
\bottomrule
\end{tabular}
}
\end{table}

\paragraph{Computational Efficiency.}
We measure the average per-batch runtime for the original methods, clipping-only variants, and differentially private (DP) variants. 
Clipping introduces only a small overhead (about 20–30 ms per batch) over the original methods while yielding a clear accuracy improvement. 
The fully DP variants requires larger implementation changes for some methods, so the runtime varies across methods. 
However, the additional cost remains modest overall, while providing end-to-end privacy guarantees.
Even the slowest DP variant takes only $1.28\times$ the time.

\section{Conclusion}
Private editions of test-time updates that respect DP can still reduce error, so adaptation remains possible and helpful.
Non-private use of gradient clipping, specifically the per-sample clipping of DP, improves the error of multiple existing TTA methods.
The small but key difference of sample-wise vs. batch-wise clipping has previously been overlooked for test-time updates, but may now find further application or extension.
While there is a gap in accuracy between DP-TTA and standard TTA updates, the trade-off has now been measured, so that future work can make progress on closing it.

\section*{Acknowledgements}
We thank Francesco Croce for pre-reviewing and giving feedback.
ES is supported by a Canada CIFAR AI Chair and the Natural Sciences and Engineering Research Council of Canada (NSERC) Discovery Grant (RGPIN-2025-06878).
ML is supported by the Natural
Sciences and Engineering Research Council of Canada (NSERC) [reference number RGPIN-2022-
04469].
Resources used in preparing this research were provided, in part, by the Province of Ontario, the Government of Canada through CIFAR, companies sponsoring the Vector Institute, as well as the Digital Research Alliance of Canada (alliancecan.ca).

\section*{Impact Statement}
Our private test-time updates aim to produce machine learning systems that are more robust and more private.
Robustness and privacy are positives, desired for trustworthy deployments of machine learning, but it is critical to note the type of robustness and privacy provided. 
The robustness of these test-time updates is empirical: it is measured by experiment but not guaranteed, as is the case for existing test-time adaptation methods.
The privacy guarantee of these updates is theoretical: it is backed by differential privacy, but only guarantees this type of privacy.
We evaluate on standard benchmarks for visual recognition and adaptation, and so we do not influence the choice of tasks for better or worse. 
By studying the intersection of test-time adaptation and differential privacy this work encourages further progress on robustness and privacy.

\bibliography{ref}
\bibliographystyle{icml2026}

\newpage
\appendix
\onecolumn
\section{More Details on Integrating DP with EATA, SAR, DeYO and COME}
\label{appendix:dp_eata_sar_deyo_come}
This section presents more details on the EATA \cite{niu2022efficient}, SAR \cite{niu2023towards}, DeYO \cite{lee2024entropy} and COME \cite{zhang2024cometesttimeadaptionconservatively} algorithms, and their corresponding DP variants.

{\bf EATA} extends the entropy minimization objective in Tent with two sample selection criteria and adds regularization to reduce error accumulation and catastrophic forgetting: 
\begin{equation*}
\begin{gathered}
    \Delta_t^{\text{EATA}} = \frac{1}{|B_t|}\sum_{\mathbf{x}_i \in B_t} \nabla_\theta ( \mathbf{w}_{i} H(\mathbf{x}_{i}, \theta)) + \lambda \,\nabla_\theta\mathcal{R}(\theta_t, \theta_0)\\
    \mathbf{w}_{i} = e^{(H_0-H(\mathbf{x}_{i}, \theta))}\mathbbm{1}_{\{H(\mathbf{x}_i, \theta)<H_0\}}(\mathbf{x}_i) \cdot \mathbbm{1}_{\{\cos(f_\theta(\mathbf{x}_i), m^{t-1})<\epsilon\}}(\mathbf{x}_i) \\
    \mathcal{R}(\theta_t, \theta_0) = \sum_{\theta_i\in \theta} \omega(\theta_{i,0})(\theta_i-\theta_{i,0}), \; \omega(\theta_{i,0})=\frac{1}{Q} \sum_{x_q\in D_F} (\frac{\partial}{\partial \theta_0}\mathcal{L}_{CE}(f_{\theta_0}(x_q), \hat{y}_q))^2
\end{gathered}
\end{equation*}
The two indicator functions $\mathbbm{1}(x_i)$ denotes the two sample filters: $\mathbbm{1}_{\{H(\mathbf{x}_i, \theta)<H_0\}}(x_i)$ select samples with confident predictions when the per-sample entropy loss $H(\mathbf{x}_i, \theta)$ is smaller than a pre-defined threshold $H_0$; $\mathbbm{1}_{\cos(f_\theta(\mathbf{x}_i), m^{t-1})<\epsilon}(x)$ selects samples with diverse model outputs based on the cosine similarity between the current prediction to a moving average $m^{t}=\alpha(1/n\sum_{i=1}^n \Hat{y}_i^t) + (1-\alpha)m^{t-1}$. Samples that are filtered do not enter the calculation of mean entropy loss thus do not participate in adapting model parameters. $\mathcal{R}(\theta_t, \theta_0)$ is a Fisher regularizer where $\theta_0$ denotes the initial model parameter from the source model. $D_F=\{x_q\}_{q=1}^{Q}$ is a subset of test data of the source model that is not overlapping with the test data to adapt, and $\hat{y}_q$ is there corresponding predictions from the source model. The weights $\omega(\theta_i^o)$ are computed once before adaptation begins and fixed thereafter. $\lambda$ is a non-negative constant for adjusting the strength of regularization.

Both sample filters violate DP as they require computing statistics based on test data $x_i$, e.g. $H(\mathbf{x}_i, \theta)$, $f_\theta(\mathbf{x}_i)$ and $m^{t-1}$, and changes the effective batch size $B_t$. We propose to remove these two steps for {\bf DP-EATA}. The source model parameters $\theta_0$ and data $D_F$ are considered as public information, thus the regularizer $\mathcal{R}$ does not incur privacy loss. The update of DP-EATA is:
\begin{equation*}
\begin{gathered}
    \mathbf{g}_t(\mathbf{x}_i) = \nabla_\theta \Big(e^{H_0-H(\mathbf{x}_i, \theta)} H(\mathbf{x}_i, \theta_t) \Big), \;\Bar{\mathbf{g}}_t(\mathbf{x}_i) = \mathbf{g}_t(\mathbf{x}_i)/\max{\Big(1,\frac{||\mathbf{g}_t(\mathbf{x}_i)||_2}{C}\Big)} \\
    \Delta_t^{\text{DP-EATA}} = \frac{1}{|B_t|}\Big( \sum_{\mathbf{x}_i \in B_t} \Bar{\mathbf{g}}_t(\mathbf{x}_i) + \mathcal{N}(0,C^2\sigma^2\mathbb{I}^d) \Big) + \lambda \nabla_\theta \mathcal{R}(\theta_t, \theta_0)
\end{gathered}
\end{equation*}
The privacy analysis follows from Proposition~\ref{prop:privacy}.

{\bf SAR} uses the same sample filter as the first filter in EATA, and further improves robustness by incorporating sharpness aware optimization to the target of entropy minimization, that finds flat minima for better generalization. The update of SAR is:
\begin{equation*}
\label{eq:loss_sar}
\begin{gathered}
\mathcal{L}^{\text{SAR}} = \frac{1}{|B_t|}\sum_{\mathbf{x}_i \in B_t}  \mathbbm{1}_{\{H(\mathbf{x}_i, \theta)<H_0\}}(\mathbf{x}_i) \ell_{\text{sar}}(\mathbf{x}_i, \theta),\;\ell_{\text{sar}}(\theta, \mathbf{x}_i) = \max_{||\epsilon||_2\leq \rho} H(\mathbf{x}_i, \theta+\epsilon).
\end{gathered}
\end{equation*}
The sharpness minimization optimization is performed as follows. It first calculates the optimal weight perturbation given the gradient $\nabla_\theta H(\mathbf{x}_i, \theta)$, by solving a dual norm problem \cite{foret2020sharpness}. The solution is given by $\epsilon(\theta)=\rho\,\mathrm{sign}(\nabla_\theta H(x, \theta))|\nabla_\theta H(x, \theta)|/||\nabla_\theta H(x, \theta)||_2$. The model parameters are then updated with:
\[
    \Delta_t^{\text{SAR}}=\frac{1}{|B_t|}\sum_{\mathbf{x}_i \in B_t} \mathbbm{1}_{\{H(\mathbf{x}_i, \theta)<H_0\}}(\mathbf{x}_i) \nabla_\theta H(\mathbf{x}_i, \theta)|_{\theta+\epsilon(\theta)}
\]

For the same reason as explained in DP-EATA, we first remove the sample filtering criteria $\mathbbm{1}_{\{H(\mathbf{x}_i, \theta)<H_0\}}(\mathbf{x}_i)$ in {\bf DP-SAR}. As the sharpness minimization problem involves accessing $H(\mathbf{x}_i, \theta)$ and its gradient, it need to be privatized to ensure an end-to-end privacy guarantee. We substitute these steps following a differentially private version of sharpness-aware learning (DP-SAT, \citet{park2023differentially}), which the weight perturbation $\Tilde{\epsilon}_t(\theta)$ is calculated using the last step's private gradient, and $\Tilde{\mathbf{g}}_{t-1}$. The resulting DP-SAR update is:
\begin{equation*}
\begin{gathered}
\mathbf{g}_t(\mathbf{x}_i)=\nabla_\theta H(\mathbf{x}_i,\theta_t)|_{\theta_t+\Tilde{\epsilon}_t(\theta)},\;\Tilde{\epsilon}_t(\theta)=\rho\,\Tilde{\mathbf{g}}_{t-1}/||\Tilde{\mathbf{g}}_{t-1}||_2, \;\Bar{\mathbf{g}}_t(\mathbf{x}_i) = \mathbf{g}_t(\mathbf{x}_i)/\max{\Big(1,\frac{||\mathbf{g}_t(\mathbf{x}_i)||_2}{C}\Big)} \\
\Delta^{\text{DP-SAR}}_t=\frac{1}{|B_t|} \Big( \sum_{\mathbf{x}_i \in B_t} \Bar{\mathbf{g}}_t(\mathbf{x}_i) + \mathcal{N}(0,C^2\sigma^2\mathbb{I}^d) \Big)
\end{gathered}
\end{equation*}
Finally, SAR recovers model parameters to the initial state $\theta_0$ by accumulating a moving average of entropy losses and comparing it to a pre-defined threshold, i.e. SAR resets $\theta_t$ to $\theta_0$ when $e_m < e_0$, where $e_m=0.9\times e_m + 0.1\times H(\mathbf{x}_i, \theta)$. Such step prevents model collapse under a few extremely hard samples. The moving average of clean predictions $H(\mathbf{x}_i, \theta)$ violates the DP guarantee in Proposition~\ref{prop:privacy}. We propose to remove such model recovery step as per-sample gradient clipping provides a similar control on preventing hard samples from collapsing the model. Empirical performances in Table~\ref{tab:imagenetc_s5_continual} and \ref{tab:imagenetc_mild_s5_episodic} provides evidence that DP-SAR has better adaptation performances without recovering model parameters.

{\bf DeYO} enhances the sample filters and sample weighting strategies with an additional criterion based on pseudo-label probability difference (PLPD):
\begin{equation*}
\label{eq:loss_deyo}
\begin{gathered}
    \Delta_t^{\text{DeYO}} = \frac{1}{|B_t|} \sum_{\mathbf{x}_i \in B_t}  \nabla_\theta \big((e^{H_0-H(\mathbf{x}_i, \theta)}+e^{\mathrm{PLPD}_\theta(\mathbf{x}_i,\mathbf{x}_i')}) \cdot  \mathbbm{1}_{\{H(\mathbf{x}_i, \theta)<H_0,\,\mathrm{PLPD}_\theta(\mathbf{x}_i,\mathbf{x}_i')>\tau\}}(\mathbf{x}_i) H(\mathbf{x}_i, \theta)\big)\\
    \mathrm{PLPD}_\theta(\mathbf{x}_i,\mathbf{x}_i')=(f_\theta(\mathbf{x}_i)-f_\theta(\mathbf{x}_i'))_{\hat{y}},\,\hat{y}=\arg \max f_\theta(\mathbf{x}_i)
\end{gathered}
\end{equation*}
$x$ denotes the original test data, and $x'$ denotes a randomly patch-shuffled version of $x$. $\tau$ is a pre-defined threshold for PLPD. $\hat{y}$ is a pseudo-label estimated by the prediction of $\mathbf{x}_i$ from current model $f$.

We first similarly remove the sample filter $\mathbbm{1}_{\{H(\mathbf{x}_i, \theta)<H_0,\,\mathrm{PLPD}_\theta(\mathbf{x}_i,\mathbf{x}_i')>\tau\}}(\mathbf{x}_i)$ for {\bf DP-DeYO}. Since $\mathbf{x}_i'$ is a randomly shuffled image of $\mathbf{x}_i$, and $\mathrm{PLPD}_\theta(\mathbf{x}_i,\mathbf{x}_i')$ is computed per-sample pair $(\mathbf{x}_i,\mathbf{x}_i')$, its sensitivity can be bounded when performing per-sample gradient clipping on the weighted loss. The loss function for DP-DeYO is then
\begin{equation*}
\begin{gathered}
    \mathbf{g}_t(\mathbf{x}_i) = \nabla_\theta \Big((e^{H_0-H(\mathbf{x}_i,\theta)}+e^{\mathrm{PLPD}_\theta(\mathbf{x}_i,\mathbf{x}_i')}) H(\mathbf{x}_i,\theta_t) \Big), \;\Bar{\mathbf{g}}_t(\mathbf{x}_i) = \mathbf{g}_t(\mathbf{x}_i)/\max{\Big(1,\frac{||\mathbf{g}_t(\mathbf{x}_i)||_2}{C}\Big)} \\
    \Delta^{\text{DP-DeYO}}_t=\frac{1}{|B_t|} \Big( \sum_{\mathbf{x}_i \in B_t} \Bar{\mathbf{g}}_t(\mathbf{x}_i) + \mathcal{N}(0,C^2\sigma^2\mathbb{I}^d) \Big)
\end{gathered}
\end{equation*}

{\bf COME} proposes to modify the entropy loss used in TTA methods with a conservative version that models uncertainty, i.e. $\min_{\theta} H(\mathbf{x}_i, \theta)$ subject to $|u_\theta(\mathbf{x}_i)-u_{\theta_0}(\mathbf{x}_i)|\leq \delta$, where $u$ is the uncertainty estimate, and $\delta$ is a threshold. Specifically, 
COME proposes the \textit{entropy of opinion} loss as:
\begin{equation*}
\label{eq:loss_come}
\begin{gathered}
    \mathcal{L}_{\text{COME}}= \frac{1}{|B_t|} \sum_{\mathbf{x}_i \in B_t} \Big( -\sum_{k=1}^K b_k \log b_k-u \log u \Big) \\
    b_k = \frac{\alpha_k -1}{S},\; \alpha_k = e^{f(x)_k},\; S=\sum_{k=1}^K \alpha_k, \, u=\frac{K}{S}
\end{gathered}
\end{equation*}
To constrain the uncertainty mass, COME shows that constraining $u$ can be done by constraining the norm of the model output logits (Lemma 1~\citet{zhang2024cometesttimeadaptionconservatively}).
Let $f(x)$ be the logits of $x$, $k\in\{1,\ldots K\}$ denotes the class index, COME reparameterizes the logit and adds a stop-gradient on the logit norm:
\begin{equation*}
\label{eq:come_logit}
    f(x) \leftarrow \frac{f(x)}{||f(x)||_2} \cdot ||f(x)||_2^{\textnormal{no grad}}
\end{equation*}
Such logit then enters the $\alpha_k$ term in calculating the COME loss.
There is no modification on the COME loss to satisfy DP except when it is used to replace $H(\mathbf{x}_i,\theta)$ in other TTA algorithms, e.g. the modifications in the DeYO algorithm to satisfy DP carry over to DP-DeYO-COME.

{\bf Alternative approach of integrating DP with a specific TTA algorithm.}
An alternative approach to removing sample filters while preserving privacy guarantees is to fix the effective batch size $B_t$ at a pre-defined constant batch size $B$. This applies to sample filters that computes criterion at per-sample level, e.g. $\mathbbm{1}_{\{H(\mathbf{x}_i, \theta)<H_0}(\mathbf{x}_i)$, and $\mathbbm{1}_{\mathrm{PLPD}_\theta(\mathbf{x}_i,\mathbf{x}_i')>\tau\}}(\mathbf{x}_i)$.

Using entropy-loss sample filter as an example, selecting samples with $H(\mathbf{x}_i, \theta)\geq H_0$ reduces the size of $|B_t|$ when computing the mean loss over a batch. Since $|B_t|$ (and thus the mean loss value) depends on the collective inclusion or exclusion decisions across all samples in the batch, such filtering violates bounded per-sample sensitivity. In contrast, when $B_t$ is fixed at $B$, the loss computation depends on each sample independently, where the contribution of any individual sample can be bounded through per-sample gradient clipping with $L2$-threshold $C$. Therefore, under such fixed-denominator formulation, the resulting update remains compatible with DP. We evaluate this variant of DP-EATA-V2, DP-SAR-V2 and DP-DeYO-V2 in Table~\ref{tab:dp_variant_fix_denom_sar_deyo}. Comparing to results in Table~\ref{tab:privacy_accuracy_tradeoff_eps}, we observe no significant difference or worse performance from these $V2$ variants. 

Sample filters that rely on moving averages or statistics aggregated across multiple samples, e.g. $\mathbbm{1}_{\{\cos(f_\theta(\mathbf{x}_i), m^{t-1})<\epsilon\}}(\mathbf{x}_i)$ in EATA, are generally more challenging to preserving its intended effect while making it DP-compatible. We leave the investigation of effective integration of DP with such algorithm-specific filtering mechanisms to future work.

\begin{table*}[htb]
\centering
\caption{\textbf{Alternative design of DP-TTA algorithms.} The DP-EATA-V2, DP-SAR-V2 and DP-DeYO-V2 are variants with sample filtering steps included and a fixed batch size. The results are top-1 accuracy (\%) when adapting on ImageNet-C at severity level 5 under a continual setting. We observe no significant difference or worse performance from these V2 variants.}
\label{tab:dp_variant_fix_denom_sar_deyo}

\resizebox{\textwidth}{!}{%
\begin{tabular}{lc|ccc|cccc|cccc|cccc|c}
\toprule
\multirow{2}{*}{\textbf{Method}} &
\multirow{2}{*}{$\bm{\varepsilon}$\textbf{-DP}} &
\multicolumn{3}{c|}{\textbf{Noise}} &
\multicolumn{4}{c|}{\textbf{Blur}} &
\multicolumn{4}{c|}{\textbf{Weather}} &
\multicolumn{4}{c|}{\textbf{Digital}} &
\multirow{2}{*}{\textbf{Avg.}} \\
\cline{3-17}
& & Gauss. & Shot & Impul.
& Defoc. & Glass & Motion & Zoom
& Snow & Frost & Fog & Brit.
& Contr. & Elastic & Pixel & JPEG
& \\
\midrule
DP-EATA-V2 & $20$ &58.2 & 59.7 & 60.0 & 57.3 & 56.5 & 62.9 & 61.0 & 68.1 & 66.0 & 66.6 & 78.5 & 61.3 & 68.6 & 74.0 & 72.0 & 64.7\\
 & $15$ & 56.8 & 58.3 & 58.7 & 55.1 & 55.1 & 61.1 & 59.5 & 66.9 & 64.7 & 64.5 & 77.8 & 58.3 & 67.5 & 73.2 & 71.1 & 63.2\\
 & $10$ &58.1 & 59.3 & 59.7 & 56.6 & 51.3 & 61.0 & 57.2 & 65.8 & 63.6 & 65.7 & 78.3 & 59.4 & 63.1 & 72.2 & 71.3 & 62.8\\
 & $5$ & 57.0 & 58.3 & 58.7 & 54.7 & 50.5 & 59.6 & 55.7 & 65.1 & 62.4 & 59.6 & 77.9 & 56.4 & 62.3 & 71.7 & 70.8 & 61.4\\
 & $1$ & 54.7 & 55.6 & 55.9 & 51.6 & 37.8 & 55.8 & 48.8 & 61.3 & 57.4 & 66.6 & 77.4 & 43.3 & 48.7 & 68.4 & 69.5 & 56.9\\
 \midrule
DP-SAR-V2 & $20$ & 56.4 & 58.4 & 58.7 & 52.2 & 47.8 & 56.3 & 51.7 & 61.3 & 61.0 & 62.5 & 76.4 & 57.1 & 55.0 & 66.4 & 68.2 & 59.3\\
 & $15$ & 54.8 & 56.9 & 57.9 & 51.2 & 43.0 & 57.3 & 50.4 & 61.7 & 60.8 & 65.4 & 77.8 & 59.3 & 52.3 & 67.5 & 69.4 &59.0\\
 & $10$ & 54.7 & 56.9 & 57.8 & 51.2 & 42.9 & 57.3 & 50.4 & 61.7 & 60.8 & 65.3 & 77.8 & 59.2 & 52.4 & 67.4 & 69.4 &59.0\\
 & $5$ & 54.6 & 56.7 & 57.7 & 51.4 & 42.5 & 57.2 & 50.2 & 61.4 & 60.7 & 64.8 & 77.7 & 58.0 & 52.3 & 67.1 & 69.2 & 58.8\\
 & $1$ & 53.9 & 54.5 & 55.0 & 50.1 & 36.0 & 55.0 & 48.5 & 60.6 & 60.4 & 66.3 & 77.5 & 51.5 & 48.1 & 67.3 & 69.4 &56.9\\
 \midrule
DP-DeYO-V2 & $20$ & 54.9& 56.8& 57.5& 50.7& 42.4& 56.6& 49.6& 60.8& 61.1& 65.0& 77.4& 58.0& 52.1& 65.8& 68.9 & 58.5\\
 & $15$ &54.9& 56.8& 57.5& 50.7& 42.4& 56.5& 49.6& 60.8& 61.1& 65.0& 77.4& 57.9& 52.1& 65.9& 69.0& 58.5\\
 & $10$ & 54.9& 56.8& 57.5& 50.7& 42.4& 56.5& 49.7& 60.8& 61.0& 64.9& 77.4& 57.7& 52.1& 65.9& 69.0 &58.5\\
 & $5$ & 54.9& 56.8& 57.6& 50.8& 42.3& 56.5& 49.7& 60.6& 60.9& 64.6& 77.4& 57.3& 51.8& 66.0& 68.9 &58.4 \\
 & $1$ & 54.4& 56.0& 56.9& 51.0& 40.0& 56.1& 49.3& 59.7& 59.8& 63.2& 77.5& 54.8& 49.6& 67.3& 69.2 &57.7 \\

\bottomrule
\end{tabular}%
}
\end{table*}

\section{More Experiments and Results}

\subsection{TTA with Per-sample Gradient Clipping in Episodic Setting}
\begin{table*}[t]
\centering
\caption{\textbf{Per-sample gradient clipping improves adaptation performance across multiple TTA methods.} We add per-sample gradient clipping to each TTA method and compare with its original version. The results are top-1 accuracy (\%) when adapting on ImageNet-C at severity level 5 under an \textbf{episodic} setting. The bold value signifies the top-performing result across all rows. The shaded row indicate the \%improvements after adding per-sample gradient clipping to the TTA method.}
\label{tab:imagenetc_mild_s5_episodic}

\resizebox{\textwidth}{!}{%
\begin{tabular}{l|ccc|cccc|cccc|cccc|c}
\toprule
\multirow{2}{*}{\textbf{Method}} &
\multicolumn{3}{c|}{\textbf{Noise}} &
\multicolumn{4}{c|}{\textbf{Blur}} &
\multicolumn{4}{c|}{\textbf{Weather}} &
\multicolumn{4}{c|}{\textbf{Digital}} &
\multirow{2}{*}{\textbf{Avg.}} \\
\cline{2-16}
& Gauss. & Shot & Impul.
& Defoc. & Glass & Motion & Zoom
& Snow & Frost & Fog & Brit.
& Contr. & Elastic & Pixel & JPEG
& \\
\midrule
Source ($f_{\theta_0}$) & 53.9 & 53.3 & 54.1 & 49.6 & 32.3 & 52.3 & 45.4 & 60.2 & 62.4 & 65.8 & 77.3 & 36.6 & 45.2 & 67.3 & 69.3 & 55.0 {\scriptsize (0.02)}\\ 
$\bullet$ Tent & 58.7 & 59.7 & 60.1 & 58.8 & 53.8 & 62.6 & 58.7 & 54.6 & 59.7 & 70.0 & 78.6 & 66.1 & 60.3 & 72.2 & 71.2 & 63.0 {\scriptsize (0.21)} \\
$\bullet$ Tent + clip & 60.8 & 62.5 & 62.2 & 61.1 & 60.0 & 66.7 & 64.9 & 71.0 & 68.8 & 73.8 & 79.8 & 67.2 & 71.8 & 76.2 & 73.5 & 68.0 {\scriptsize (1.72)}\\
\rowcolor{gray!15}
 &$\triangle$2.1& $\triangle$2.8& $\triangle$2.1& $\triangle$2.3& $\triangle$6.2& $\triangle$4.1& $\triangle$6.2& $\triangle$16.4& $\triangle$9.1& $\triangle$3.8& $\triangle$1.2& $\triangle$1.1& $\triangle$11.5& $\triangle$4.0& $\triangle$2.3& $\triangle$5.0 \\
$\bullet$ EATA & 60.3 & 61.5 & 61.7 & 59.9 & 58.3 & 65.4 & 63.4 & 69.1 & 67.1 & 72.1 & 79.3 & 64.3 & 69.3 & 74.7 & 72.7 & 66.6 {\scriptsize (0.16)}\\
$\bullet$ EATA + clip & 61.0 & 62.5 & 62.4 & 61.1 & 60.1 & 66.6 & 65.4 & 71.2 & 69.0 & 73.5 & 79.8 & 66.1 & 72.1 & 76.2 & 73.7 & 68.0 {\scriptsize (0.03)}\\
\rowcolor{gray!15}
 &$\triangle$0.7& $\triangle$1.0& $\triangle$0.7& $\triangle$1.2& $\triangle$1.8& $\triangle$1.2& $\triangle$2.0& $\triangle$2.1& $\triangle$1.9& $\triangle$1.4& $\triangle$0.5& $\triangle$1.8& $\triangle$2.8& $\triangle$1.5& $\triangle$1.0& $\triangle$1.4 \\
$\bullet$ SAR & 57.5 & 59.4 & 59.5 & 57.6 & 56.5 & 63.2 & 61.5 & 57.0 & 64.3 & 67.0 & 77.6 & 63.6 & 67.6 & 73.5 & 71.5 & 63.8 {\scriptsize (0.08)}\\
$\bullet$ SAR + clip & 56.1 & 58.4 & 58.4 & 54.6 & 55.3 & 62.6 & 61.6 & 68.0 & 65.4 & 70.3 & 77.6 & 61.4 & 69.5 & 73.3 & 70.8 & 64.2 {\scriptsize (0.67)}\\
\rowcolor{gray!15}
 &$\triangle$-1.4& $\triangle$-1.0& $\triangle$-1.1& $\triangle$-3.0& $\triangle$-1.2& $\triangle$-0.6& $\triangle$0.1& $\triangle$11.0& $\triangle$1.1& $\triangle$3.3& $\triangle$0.0& $\triangle$-2.2& $\triangle$1.9& $\triangle$-0.2& $\triangle$-0.7& $\triangle$0.4 \\
$\bullet$ DeYO & 56.8 & 58.8 & 58.8 & 57.1 & 56.4 & 63.7 & 61.2 & 68.0 & 63.8 & 68.3 & 78.3 & 63.1 & 68.6 & 73.5 & 71.4 & 64.5 {\scriptsize (0.14)} \\
$\bullet$ DeYO + clip & 56.7 & 58.8 & 57.9 & 56.1 & 55.4 & 62.8 & 60.8 & 67.9 & 65.2 & 70.9 & 77.7 & 62.5 & 69.5 & 73.6 & 70.9 & 64.5 {\scriptsize (1.00)}\\
\rowcolor{gray!15}
 &$\triangle$-0.1& $\triangle$0.0& $\triangle$-0.9& $\triangle$-1.0& $\triangle$-1.0& $\triangle$-0.9& $\triangle$-0.4& $\triangle$-0.1& $\triangle$1.4& $\triangle$2.6& $\triangle$-0.6& $\triangle$-0.6& $\triangle$0.9& $\triangle$0.1& $\triangle$-0.5& $\triangle$0.0 \\
$\bullet$ DeYO-COME & 61.9 & 63.2 & 63.0 & 61.6 & 60.7 & 67.1 & 65.6 & 71.8 & 69.0 & 74.2 & {\bf80.1} & 68.1 & 71.9 & {\bf76.5} & {\bf73.9} & 68.6 {\scriptsize (2.36)} \\
$\bullet$ DeYO-COME + clip & {\bf62.0} & {\bf63.6} & {\bf63.2} & {\bf62.4} & {\bf61.5} & {\bf67.9} & {\bf66.8} & {\bf72.5} & {\bf70.1} & {\bf75.0} & 80.0 & {\bf68.4} & {\bf73.2} & {\bf76.5} & 73.8 & {\bf69.1} {\scriptsize (0.04)}\\
\rowcolor{gray!15}
 &$\triangle$0.1& $\triangle$0.4& $\triangle$0.2& $\triangle$0.8& $\triangle$0.8& $\triangle$0.8& $\triangle$1.2& $\triangle$0.7& $\triangle$1.1& $\triangle$0.8& $\triangle$-0.1& $\triangle$0.3& $\triangle$1.3& $\triangle$0.0& $\triangle$-0.1& $\triangle$0.5 \\

\bottomrule
\end{tabular}%
}
\end{table*}

\begin{table*}[htb]
\caption{\textbf{Per-sample clipping has better accuracy over other methods of handling difficult samples.} We report the average top-1 accuracy (and the standard deviation over 5 repeated runs) over 15 common corruption types, when adapting Imagenet-C (level 5) with ViT in the \textbf{episodic} setting. The threshold $E_0$ used for filtering entropy loss is in the scale of $\log(10^3)$. \%clip/filter indicates the percentage of samples/batch clipped or filtered, averaged over all corruption types. Bold value indicates the best result across rows.}
\centering
\resizebox{\textwidth}{!}{%
\begin{tabular}{lcc|ccc|cccc|cccc|cccc|c}
\toprule
\multirow{2}{*}{\textbf{Method}} &
\multirow{2}{*}{\begin{tabular}[c]{@{}c@{}}\textbf{Thre.}\\ $(C,H_0)$\end{tabular}} &
\multirow{2}{*}{\begin{tabular}[c]{@{}c@{}}\textbf{\%clip/}\\ \textbf{filter}\end{tabular}} &
\multicolumn{3}{c|}{\textbf{Noise}} &
\multicolumn{4}{c|}{\textbf{Blur}} &
\multicolumn{4}{c|}{\textbf{Weather}} &
\multicolumn{4}{c|}{\textbf{Digital}} &
\multirow{2}{*}{\textbf{Avg.}} \\
\cline{4-18}
& & & Gauss. & Shot & Impul.
& Defoc. & Glass & Motion & Zoom
& Snow & Frost & Fog & Brit.
& Contr. & Elastic & Pixel & JPEG
& \\
\midrule
Source ($f_{\theta_0}$) & /& /& 53.9 & 53.3 & 54.1 & 49.6 & 32.3 & 52.3 & 45.4 & 60.2 & 62.4 & 65.8 & 77.3 & 36.6 & 45.2 & 67.3 & 69.3 & 55.0 {\scriptsize (0.02)}\\
Tent & /& / & 58.7 & 59.7 & 60.1 & 58.8 & 53.8 & 62.6 & 58.7 & 54.6 & 59.7 & 70.0 & 78.6 & 66.1 & 60.3 & 72.2 & 71.2 & 63.0 {\scriptsize (0.21)}\\
+ Per-sample clip  & 1& 100\% & 60.7 & {\bf62.4} & {\bf62.1} & {\bf61.2} & {\bf60.2} & {\bf66.6} & {\bf65.2} & {\bf71.1} & {\bf68.8} & {\bf73.9} & 79.8 & {\bf67.0} & 71.9 & {\bf76.2} & {\bf73.8} & {\bf68.1} {\scriptsize (0.03)}\\
& 5& 49.3\% & {\bf60.9} & 61.9 & {\bf62.1} & 61.0 & 58.5 & 65.6 & 63.4 & 69.5 & 66.9 & 73.1 & {\bf79.9} & {\bf67.0} & 69.5 & 75.5 & 73.3 & 67.2 {\scriptsize (0.02)}\\
& 10& 15.7\% & 60.4 & 62.0 & 61.9 & 60.7 & 59.2 & 66.0 & 63.8 & 20.0 & 11.4 & 73.6 & 80.0 & 66.9 & 70.3 & 76.0 & 73.5 & 60.4 {\scriptsize (1.56)}\\
\midrule
+ Batch-level clip  & 1& 81.8\% & 57.7 & 58.9 & 58.5 & 55.6 & 49.4 & 60.3 & 55.3 & 57.7 & 60.0 & 66.0 & 78.4 & 59.6 & 57.2 & 71.9 & 70.9 & 61.2 {\scriptsize (1.70)}\\
& 5& 2.5\% &55.6 & 56.6 & 56.8 & 53.5 & 42.3 & 57.3 & 51.2 & 61.1 & 56.5 & 60.2 & 77.6 & 57.3 & 49.3 & 69.0 & 69.7 & 58.3 {\scriptsize (1.49)}\\
& 10& 0.0\% & 55.0 & 56.4 & 55.9 & 52.0 & 41.5 & 56.3 & 50.2 & 60.5 & 55.9 & 59.3 & 77.4 & 55.6 & 48.7 & 68.9 & 69.4 & 57.5 {\scriptsize (1.33)}\\
\midrule
+ Filter by loss & 0.1& 77.6\% & 54.9 & 56.1 & 55.7 & 53.9 & 43.0 & 57.1 & 50.4 & 61.7 & 63.0 & 65.8 & 77.5 & 36.6 & 50.6 & 68.6 & 69.7 & 57.6 {\scriptsize (0.09)}\\
& 0.3& 8.5\% & 60.5 & 61.7 & 61.9 & 60.3 & 58.5 & 65.3 & 63.2 & 69.2 & 67.3 & 71.7 & 79.6 & 65.0 & 68.7 & 75.0 & 72.9 & 66.7 {\scriptsize (2.87)}\\
& 0.6& 2.7\% & 57.9 & 58.8 & 59.1 & 57.6 & 51.1 & 61.1 & 56.6 & 62.6 & 60.7 & 65.9 & 78.3 & 65.2 & 57.1 & 71.1 & 70.5 & 62.2 {\scriptsize (0.11)}\\
\midrule
+ Filter by grad norm  & 1 & 100\% & 53.9 & 53.3 & 54.1 & 49.6 & 32.3 & 52.3 & 45.4 & 60.2 & 62.4 & 65.8 & 77.4 & 36.6 & 45.2 & 67.3 & 69.3 & 55.0 {\scriptsize (0.01)}\\
& 5& 49.3\% & 59.5 & 61.0 & 60.6 & 57.7 & 52.5 & 62.6 & 59.4 & 67.8 & 64.3 & 65.5 & 79.1 & 52.5 & 66.2 & 73.7 & 72.4 & 63.7 {\scriptsize (0.18)}\\
& 10& 15.7\% &59.3 & 60.9 & 60.3 & 57.8 & 57.9 & 64.6 & 63.3 & 70.2 & 68.0 & 72.3 & 79.3 & 0.5 & {\bf72.2} & 75.6 & 73.0 & 62.4 {\scriptsize (0.14)}\\
\midrule
+ Filter by both, $\cap$  & 5,0.3 & 4.3\% & 59.7 & 61.7 & 61.1 & 59.4 & 59.1 & 65.7 & 64.8 & 71.0 & {\bf68.8} & 73.3 & 79.7 & 64.5 & 71.8 & 76.0 & 73.6 & 67.3 {\scriptsize (0.25)}\\
+ Filter by both, $\cup$ & 5,0.3 & 42.8\% & 60.1 & 62.1 & 61.5 & 59.7 & 58.5 & 65.1 & 63.4 & 70.2 & 67.8 & 65.8 & 79.6 & 24.6 & 71.0 & 75.6 & 73.1 & 63.9 {\scriptsize (0.63)}\\
\bottomrule
\end{tabular}%
}
\label{tab:per_sample_clip_compare_reset1}
\end{table*}

Table~\ref{tab:imagenetc_mild_s5_episodic} demonstrates that adding per-sample gradient clipping to the TTA methods have similarly improved performances in the episodic adaptation setting (where the model parameter is reset to $\theta_0$ when switching to a different corruption type). 
We observe that DeYO-COME + clip has the best performance overall, with a $69.1\%$ average accuracy over all 15 corruption types.
We observe a significant improvement in average accuracy overall, with the highest of $5\%$ when adding per-sample gradient clipping to Tent. 
Table~\ref{tab:per_sample_clip_compare_reset1} further ablates the effect of per-sample clipping when combining with Tent, to other alternative methods that control hard sample influence to the model updates. 
Similar to the continual adaptation case, we observe that Tent + clip has the best accuracy for most corruption types. 
Such results indicate that per-sample gradient clipping can be an effective add-on to existing TTA methods for improved adaptation performances.

\subsection{Privacy-Accuracy Trade-off by Corruption Type}
\label{appendix:priv_util_tradeoff_appendix}
\begin{table*}[htb]
\centering
\caption{%
\textbf{Privacy-accuracy tradeoff for DP-TTA methods with varying noise level.} We measure top-1 accuracy (\%) (with standard deviation over 5 repeated runs) over 15 common corruption types under different privacy budgets $(\varepsilon,\delta=1e^{-6})$, when adapting ImageNet-C at severity level 5 in the \textbf{continual} setting.
}
\label{tab:privacy_accuracy_tradeoff_eps}
\resizebox{0.95\textwidth}{!}{%
\begin{tabular}{lc|ccc|cccc|cccc|cccc|c}
\toprule
\multirow{2}{*}{\textbf{Method}} &
\multirow{2}{*}{$\bm{\varepsilon}$\textbf{-DP}} &
\multicolumn{3}{c|}{\textbf{Noise}} &
\multicolumn{4}{c|}{\textbf{Blur}} &
\multicolumn{4}{c|}{\textbf{Weather}} &
\multicolumn{4}{c|}{\textbf{Digital}} &
\multirow{2}{*}{\textbf{Avg.}} \\
\cline{3-17}
& & Gauss. & Shot & Impul.
& Defoc. & Glass & Motion & Zoom
& Snow & Frost & Fog & Brit.
& Contr. & Elastic & Pixel & JPEG
& \\
\midrule
Source ($f_{\theta_0}$) & $\infty$ & 53.9 & 53.3 & 54.1 & 49.6 & 32.3 & 52.3 & 45.4 & 60.2 & 62.4 & 65.8 & 77.3 & 36.6 & 45.2 & 67.3 & 69.3 & 55.0 {\scriptsize (0.02) } \\ \midrule
Tent & $\infty$ &55.8 & 58.3 & 59.3 & 53.0 & 46.7 & 59.1 & 53.2 & 63.0 & 61.4 & 66.8 & 78.1 & 64.3 & 53.1 & 69.4 & 70.3 & 60.8 {\scriptsize (0.01) }\\
Tent + clip& $\infty$ &60.9 & 63.4 & 63.4 & 56.7 & 56.9 & 62.8 & 59.8 & 66.2 & 66.1 & 71.1 & 78.1 & 62.9 & 62.8 & 71.7 & 70.8 & 64.9 {\scriptsize (0.05)}\\
DP-Tent & $20$ & 58.9 & 62.1 & 62.3 & 54.3 & 54.2 & 60.9 & 57.4 & 64.0 & 64.0 & 68.4 & 78.0 & 59.8 & 58.6 & 70.6 & 70.1 & 62.9 {\scriptsize  (0.06)}\\
& $15$ & 57.9 & 61.2 & 61.7 & 54.0 & 52.7 & 60.7 & 56.6 & 64.2 & 62.9 & 67.9 & 78.1 & 61.1 & 59.1 & 70.4 & 70.5 & 62.6 {\scriptsize  (0.11)}\\
& $10$ & 57.8 & 61.0 & 61.6 & 53.7 & 52.5 & 60.4 & 56.1 & 63.7 & 62.4 & 67.1 & 77.9 & 59.1 & 58.1 & 70.0 & 70.1 & 62.1 {\scriptsize  (0.11)}\\
& $5$ & 56.1 & 58.3 & 59.5 & 52.7 & 46.2 & 58.7 & 52.9 & 63.4 & 62.2 & 66.7 & 78.1 & 61.8 & 55.2 & 69.2 & 70.6 & 60.8 {\scriptsize  (0.14)}\\
& $1$ & 55.1 & 57.2 & 58.1 & 51.5 & 42.8 & 57.2 & 50.3 & 61.7 & 61.3 & 63.6 & 77.4 & 52.4 & 52.6 & 67.0 & 69.3 & 58.5 {\scriptsize  (0.32)}\\
 \midrule
EATA & $\infty$ & 59.3 & 62.9 & 63.3 & 58.6 & 57.9 & 64.0 & 62.6 & 67.7 & 67.2 & 72.0 & 79.3 & 62.0 & 66.5 & 72.6 & 72.6 & 65.9 {\scriptsize (0.07) }\\
EATA + clip & $\infty$ & 60.3 & 63.7 & 63.6 & 57.0 & 58.2 & 63.7 & 62.6 & 67.9 & 67.1 & 72.0 & 78.4 & 64.3 & 66.8 & 73.1 & 71.5 & 66.0 {\scriptsize (0.04)}\\
DP-EATA & $20$ & 58.9 & 62.0 & 62.3 & 54.4 & 54.4 & 61.0 & 57.6 & 64.2 & 64.1 & 68.4 & 78.0 & 59.9 & 58.9 & 70.6 & 70.2 & 63.0 {\scriptsize (0.06)}\\
& $15$ & 57.9 & 61.1 & 61.7 & 54.1 & 52.8 & 60.6 & 56.6 & 64.1 & 62.9 & 68.0 & 78.1 & 61.1 & 59.1 & 70.4 & 70.5 & 62.6 {\scriptsize (0.09)}\\
& $10$ & 57.8 & 60.9 & 61.6 & 53.7 & 52.6 & 60.3 & 56.2 & 63.7 & 62.4 & 67.2 & 77.9 & 59.2 & 58.4 & 70.0 & 70.1 & 62.1 {\scriptsize (0.08)}\\
& $5$ & 56.9 & 59.7 & 60.7 & 53.5 & 50.1 & 59.9 & 54.7 & 64.0 & 61.9 & 66.2 & 77.6 & 57.6 & 56.7 & 69.1 & 70.1 & 61.2 {\scriptsize (0.26)}\\
& $1$ & 55.1 & 57.2 & 58.1 & 51.4 & 42.8 & 57.2 & 50.3 & 61.6 & 61.3 & 63.7 & 77.4 & 52.4 & 52.6 & 66.9 & 69.2 & 58.5 {\scriptsize (0.31)}\\
 \midrule
SAR & $\infty$ & 56.9 & 58.8 & 58.2 & 47.3 & 47.0 & 61.8 & 60.0 & 51.3 & 65.4 & 70.0 & 77.0 & 58.9 & 59.1 & 71.2 & 69.2 & 60.8 {\scriptsize (0.34)}\\
SAR + clip & $\infty$ & 56.3 & 58.6 & 58.9 & 50.9 & 53.8 & 58.6 & 58.4 & 61.8 & 62.6 & 68.0 & 76.0 & 59.6 & 62.9 & 70.6 & 68.6 & 61.7 {\scriptsize (0.26)}\\
DP-SAR & $20$ & 57.0 & 58.6 & 58.7 & 52.1 & 48.7 & 56.6 & 52.6 & 61.4 & 60.9 & 62.9 & 76.3 & 56.5 & 56.9 & 66.6 & 68.2 & 59.6 {\scriptsize (0.09)}\\
& $15$ & 56.9 & 58.5 & 58.6 & 51.9 & 48.5 & 56.4 & 52.4 & 61.2 & 60.8 & 62.3 & 76.2 & 55.5 & 56.6 & 66.6 & 68.1 & 59.4 {\scriptsize (0.07)}\\
& $10$ & 55.6 & 57.7 & 58.3 & 51.7 & 45.2 & 57.6 & 51.0 & 61.8 & 60.8 & 64.6 & 77.6 & 59.0 & 53.3 & 67.7 & 68.9 & 59.4 {\scriptsize (0.05)}\\
& $5$ & 55.5 & 57.6 & 58.2 & 51.8 & 44.7 & 57.3 & 50.5 & 61.5 & 60.5 & 64.0 & 77.3 & 56.9 & 53.2 & 67.3 & 68.7 & 59.0 {\scriptsize (0.10)}\\
& $1$ & 54.8 & 56.6 & 57.3 & 51.0 & 41.4 & 56.2 & 48.7 & 60.3 & 59.6 & 62.4 & 77.0 & 50.5 & 51.0 & 66.1 & 68.1 & 57.4 {\scriptsize (0.13)}\\
 \midrule
DeYO & $\infty$ & 57.1 & 59.0 & 59.1 & 53.6 & 52.1 & 58.2 & 54.4 & 62.4 & 61.8 & 65.5 & 76.6 & 60.1 & 58.9 & 68.1 & 68.5 & 61.0 {\scriptsize (0.12) }\\
DeYO + clip & $\infty$ & 56.2 & 59.1 & 58.7 & 51.9 & 54.7 & 60.1 & 57.5 & 64.5 & 64.1 & 69.7 & 76.9 & 60.7 & 65.5 & 71.4 & 69.0 & 62.7 {\scriptsize (0.36) }\\
DP-DeYO & $20$ & 55.1 & 57.0 & 57.6 & 50.8 & 42.9 & 56.7 & 49.7 & 61.0 & 60.9 & 64.5 & 77.4 & 57.6 & 52.4 & 66.4 & 69.2 & 58.6 {\scriptsize (0.04)}\\
& $15$ & 55.8& 57.2& 57.6& 50.7& 44.3& 56.3& 49.6& 60.2& 60.8& 63.1& 76.4& 57.6& 53.2& 63.9& 67.8 & 58.3 {\scriptsize (0.17)}\\
& $10$ &55.2 & 57.0 & 57.6 & 50.8 & 42.9 & 56.7 & 49.6 & 60.9 & 60.8 & 64.4 & 77.3 & 57.2 & 52.3 & 66.1 & 68.9 & 58.5 {\scriptsize (0.05)}\\
& $5$ & 55.7& 57.1& 57.6& 50.9& 43.9& 56.3& 49.8& 60.1& 60.4& 62.6& 76.3& 56.9& 53.0& 64.1& 67.7 & 58.2 {\scriptsize (0.16)}\\
& $1$ & 55.2 & 56.9 & 57.3 & 50.6 & 41.3 & 55.7 & 49.3 & 59.2 & 59.7 & 61.7 & 76.8 & 51.4 & 50.5 & 65.6 & 68.0 & 57.3 {\scriptsize (0.05)}\\
 \midrule
DeYO-COME & $\infty$ & 61.7 & 63.7 & 63.7 & 58.7 & 58.2 & 63.9 & 60.1 & 68.3 & 67.2 & 72.6 & 79.0 & 66.3 & 67.7 & 73.4 & 71.7 & 66.4 {\scriptsize (0.10) }\\
DeYO-COME + clip& $\infty$ & 62.0 & 64.2 & 63.8 & 58.5 & 59.1 & 65.1 & 63.9 & 70.0 & 68.6 & 74.0 & 78.9 & 66.9 & 70.3 & 74.6 & 72.1 & 67.5 {\scriptsize (0.08) }\\
DP-DeYO-COME & $20$ & 60.7 & 62.9 & 62.8 & 55.2 & 54.3 & 62.0 & 59.1 & 67.1 & 66.1 & 71.6 & 78.6 & 63.1 & 63.7 & 72.5 & 71.0 & 64.7 {\scriptsize (0.12)}\\
& $15$ & 60.1& 62.6& 62.7& 53.4& 52.2& 60.9& 57.4& 66.4& 65.6& 71.4& 78.8& 63.1& 61.4& 72.4& 70.8 & 63.9 {\scriptsize (0.08)}\\
& $10$ &60.1 & 62.4 & 62.6 & 53.1 & 51.6 & 60.6 & 57.2 & 66.1 & 65.2 & 71.0 & 78.6 & 61.9 & 61.0 & 72.0 & 70.6 & 63.6 {\scriptsize (0.13)}\\
& $5$ & 59.7& 62.0& 62.0& 51.2& 50.4& 58.5& 55.6& 63.6& 63.6& 67.8& 77.8& 55.5& 59.4& 70.8& 69.3 & 61.8 {\scriptsize (0.25)}\\
& $1$ & 56.2 & 60.0 & 60.9 & 44.9 & 38.3 & 51.9 & 50.3 & 61.2 & 60.2 & 63.9 & 77.5 & 55.1 & 50.9 & 69.8 & 68.8 & 58.0 {\scriptsize (0.16)}\\
\bottomrule
\end{tabular}%
}
\end{table*}

\begin{table*}[htb]
\centering
\caption{%
\textbf{Privacy-accuracy tradeoff for DP-TTA methods with varying noise level.} We measure top-1 accuracy (\%) (and standard deviation over 5 repeated runs) over 15 common corruption types under different privacy budgets $(\varepsilon,\delta=1e^{-6})$, when adapting ImageNet-C at severity level 5 in the \textbf{episodic} setting.}
\label{tab:privacy_accuracy_tradeoff_eps_episodic}
\resizebox{0.95\textwidth}{!}{%
\begin{tabular}{lc|ccc|cccc|cccc|cccc|c}
\toprule
\multirow{2}{*}{\textbf{Method}} &
\multirow{2}{*}{$\bm{\varepsilon}$\textbf{-DP}} &
\multicolumn{3}{c|}{\textbf{Noise}} &
\multicolumn{4}{c|}{\textbf{Blur}} &
\multicolumn{4}{c|}{\textbf{Weather}} &
\multicolumn{4}{c|}{\textbf{Digital}} &
\multirow{2}{*}{\textbf{Avg.}} \\
\cline{3-17}
& & Gauss. & Shot & Impul.
& Defoc. & Glass & Motion & Zoom
& Snow & Frost & Fog & Brit.
& Contr. & Elastic & Pixel & JPEG
& \\
\midrule
Source ($f_{\theta_0}$) & $\infty$ & 53.9 & 53.3 & 54.1 & 49.6 & 32.3 & 52.3 & 45.4 & 60.2 & 62.4 & 65.8 & 77.3 & 36.6 & 45.2 & 67.3 & 69.3 & 55.0 {\scriptsize (0.02)}\\ \midrule
Tent & $\infty$ & 58.7 & 59.7 & 60.1 & 58.8 & 53.8 & 62.6 & 58.7 & 54.6 & 59.7 & 70.0 & 78.6 & 66.1 & 60.3 & 72.2 & 71.2 & 63.0 {\scriptsize (0.21)}\\
Tent + clip & $\infty$ & 60.8 & 62.5 & 62.2 & 61.1 & 60.0 & 66.7 & 64.9 & 71.0 & 68.8 & 73.8 & 79.8 & 67.2 & 71.8 & 76.2 & 73.5 & 68.0 {\scriptsize (1.72)}\\
 DP-Tent & $20$ &59.0 & 60.7 & 60.2 & 58.0 & 57.2 & 63.5 & 61.3 & 68.4 & 66.3 & 70.4 & 78.7 & 62.6 & 68.8 & 74.6 & 72.1 &65.4 {\scriptsize (0.08)}\\
& $15$ & 57.8 & 59.9 & 59.2 & 56.3 & 56.2 & 62.4 & 60.1 & 67.6 & 65.4 & 69.4 & 78.0 & 60.2 & 68.2 & 74.0 & 71.3 & 64.4 {\scriptsize (0.06)}\\
& $10$ &58.7 & 60.0 & 59.9 & 57.6 & 52.6 & 61.9 & 58.0 & 66.2 & 64.1 & 59.9 & 78.4 & 62.0 & 63.9 & 72.6 & 71.4 & 63.1 {\scriptsize (1.65)}\\
& $5$ &57.9 & 59.4 & 59.4 & 56.5 & 52.1 & 60.9 & 56.7 & 65.4 & 63.3 & 57.8 & 78.0 & 60.2 & 63.1 & 72.1 & 70.9 &62.2 {\scriptsize (0.25)}\\
& $1$ &55.6 & 56.5 & 57.0 & 53.2 & 40.8 & 57.1 & 50.4 & 61.4 & 56.8 & 41.7 & 77.6 & 52.8 & 49.7 & 68.9 & 69.5 & 56.6 {\scriptsize (0.86)}\\
 \midrule
EATA & $\infty$ & 60.3 & 61.5 & 61.7 & 59.9 & 58.3 & 65.4 & 63.4 & 69.1 & 67.1 & 72.1 & 79.3 & 64.3 & 69.3 & 74.7 & 72.7 & 66.6 {\scriptsize (0.16)}\\
EATA + clip & $\infty$ & 61.0 & 62.5 & 62.4 & 61.1 & 60.1 & 66.6 & 65.4 & 71.2 & 69.0 & 73.5 & 79.8 & 66.1 & 72.1 & 76.2 & 73.7 & 68.0 {\scriptsize (0.03)}\\
DP-EATA & $20$ & 58.9 & 60.8 & 60.2 & 58.1 & 57.1 & 63.6 & 61.4 & 68.6 & 66.5 & 70.7 & 78.7 & 62.8 & 68.8 & 74.6 & 72.1 & 65.5 {\scriptsize (0.05)}\\
& $15$ & 57.8 & 59.8 & 59.1 & 56.4 & 56.4 & 62.4 & 60.3 & 67.5 & 65.2 & 69.2 & 77.8 & 60.4 & 68.2 & 73.7 & 71.4 & 64.4 {\scriptsize (0.10)}\\
& $10$ & 54.8 & 57.2 & 56.3 & 52.3 & 53.6 & 59.4 & 57.0 & 64.8 & 62.5 & 65.9 & 76.3 & 53.9 & 66.1 & 72.1 & 69.8 & 61.5 {\scriptsize (0.14)}\\
& $5$ & 57.8 & 59.2 & 59.2 & 56.4 & 52.0 & 60.8 & 56.6 & 65.0 & 63.1 & 59.1 & 77.8 & 60.2 & 63.0 & 71.9 & 70.7 & 62.2 {\scriptsize (0.22)}\\
& $1$ & 55.6 & 56.5 & 57.0 & 53.2 & 40.8 & 57.1 & 50.4 & 61.4 & 56.8 & 41.6 & 77.5 & 52.8 & 49.7 & 68.8 & 69.5 & 56.6 {\scriptsize (0.88)}\\
 \midrule
SAR & $\infty$ & 57.5 & 59.4 & 59.5 & 57.6 & 56.5 & 63.2 & 61.5 & 57.0 & 64.3 & 67.0 & 77.6 & 63.6 & 67.6 & 73.5 & 71.5 & 63.8 {\scriptsize (0.08)}\\
SAR + clip & $\infty$ & 56.1 & 58.4 & 58.4 & 54.6 & 55.3 & 62.6 & 61.6 & 68.0 & 65.4 & 70.3 & 77.6 & 61.4 & 69.5 & 73.3 & 70.8 & 64.2 {\scriptsize (0.67)}\\
DP-SAR & $20$ &54.2 & 56.1 & 55.3 & 53.1 & 52.2 & 58.7 & 56.8 & 64.9 & 62.2 & 67.4 & 76.6 & 58.5 & 66.1 & 71.8 & 69.7 &61.6 {\scriptsize (0.08)}\\
& $15$ & 57.1 & 58.3 & 58.4 & 56.4 & 51.3 & 60.2 & 56.2 & 62.9 & 61.5 & 62.4 & 78.1 & 61.2 & 60.7 & 71.6 & 70.8 & 61.8 {\scriptsize (2.18)}\\
& $10$ &55.8 & 57.5 & 57.3 & 55.1 & 50.9 & 59.1 & 55.8 & 63.9 & 62.1 & 63.1 & 77.5 & 60.0 & 62.5 & 70.6 & 70.0& 61.4 {\scriptsize (1.78)}\\
& $5$ & 54.4 & 56.4 & 56.2 & 53.5 & 50.0 & 57.5 & 54.2 & 62.9 & 61.3 & 61.9 & 76.9 & 58.1 & 61.6 & 69.7 & 69.5 & 60.3 {\scriptsize (0.10)}\\
& $1$ & 55.1 & 56.1 & 56.4 & 53.2 & 41.1 & 56.5 & 49.9 & 61.3 & 56.8 & 44.2 & 77.4 & 52.7 & 50.3 & 68.0 & 69.7 &56.6 {\scriptsize (1.11)}\\
 \midrule
DeYO & $\infty$ & 56.8 & 58.8 & 58.8 & 57.1 & 56.4 & 63.7 & 61.2 & 68.0 & 63.8 & 68.3 & 78.3 & 63.1 & 68.6 & 73.5 & 71.4 & 64.5 {\scriptsize (0.14)}\\
DeYO + clip & $\infty$ & 56.7 & 58.8 & 57.9 & 56.1 & 55.4 & 62.8 & 60.8 & 67.9 & 65.2 & 70.9 & 77.7 & 62.5 & 69.5 & 73.6 & 70.9 & 64.5 {\scriptsize (1.0)}\\
DP-DeYO & $20$ & 55.8 & 57.1 & 56.7 & 55.2 & 50.8 & 59.3 & 55.9 & 63.4 & 61.4 & 66.1 & 77.4 & 60.4 & 62.4 & 70.5 & 69.7 & 61.5 {\scriptsize (0.10)}\\
& $15$ & 55.7 & 57.0 & 56.6 & 55.1 & 50.7 & 59.2 & 55.8 & 63.4 & 61.4 & 66.2 & 77.4 & 60.1 & 62.3 & 70.5 & 69.7 & 61.4 {\scriptsize (0.11)}\\
& $10$ & 55.9 & 57.3 & 56.9 & 55.1 & 50.8 & 59.3 & 55.7 & 63.3 & 61.2 & 65.9 & 77.3 & 59.7 & 62.5 & 70.5 & 69.9 & 61.4 {\scriptsize (2.18)}\\
& $5$ & 54.7 & 56.3 & 55.5 & 53.7 & 50.1 & 58.1 & 54.2 & 62.4 & 60.5 & 65.6 & 76.8 & 56.8 & 61.8 & 69.9 & 69.4 & 60.4 {\scriptsize (0.08)}\\
& $1$ & 54.2 & 54.4 & 54.8 & 51.4 & 33.5 & 53.2 & 46.3 & 60.4 & 61.5 & 61.5 & 77.4 & 44.5 & 45.9 & 67.5 & 69.4 &55.7 {\scriptsize (0.04)}\\
 \midrule
DeYO-COME & $\infty$ & 61.9 & 63.2 & 63.0 & 61.6 & 60.7 & 67.1 & 65.6 & 71.8 & 69.0 & 74.2 & 80.1 & 68.1 & 71.9 & 76.5 & 73.9 & 68.6 {\scriptsize (2.36)}\\
DeYO-COME + clip& $\infty$ & 62.0 & 63.6 & 63.2 & 62.4 & 61.5 & 67.9 & 66.8 & 72.5 & 70.1 & 75.0 & 80.0 & 68.4 & 73.2 & 76.5 & 73.8 & 69.1 {\scriptsize (0.04)}\\
DP-DeYO-COME & $20$ & 60.1 & 61.7 & 61.5 & 59.7 & 58.6 & 65.2 & 63.7 & 70.6 & 67.5 & 73.6 & 79.5 & 66.2 & 70.8 & 75.4 & 73.0 & 67.1 {\scriptsize (0.01)}\\
& $15$ & 59.2 & 60.9 & 60.7 & 58.6 & 57.8 & 64.3 & 62.9 & 70.0 & 66.9 & 72.9 & 79.1 & 64.7 & 70.5 & 75.0 & 72.6 & 66.4 {\scriptsize (0.04)}\\
& $10$ & 60.4 & 61.7 & 61.9 & 59.0 & 54.9 & 63.4 & 60.0 & 69.1 & 65.6 & 71.5 & 79.8 & 66.1 & 66.2 & 74.6 & 72.6 & 65.8 {\scriptsize (0.04)}\\
& $5$ & 59.6 & 61.2 & 61.1 & 57.8 & 54.1 & 62.6 & 58.9 & 68.5 & 64.9 & 70.6 & 79.6 & 64.5 & 65.7 & 74.2 & 72.4 & 65.1 {\scriptsize (0.06)}\\
& $1$ & 57.5 & 58.1 & 58.2 & 52.3 & 40.8 & 57.2 & 51.6 & 63.0 & 58.9 & 58.0 & 78.1 & 56.7 & 52.7 & 70.1 & 70.0 &58.9 {\scriptsize (0.11)}\\
\bottomrule
\end{tabular}%
}
\end{table*}

\begin{figure*}[htb]
\centering
\includegraphics[width=\textwidth]{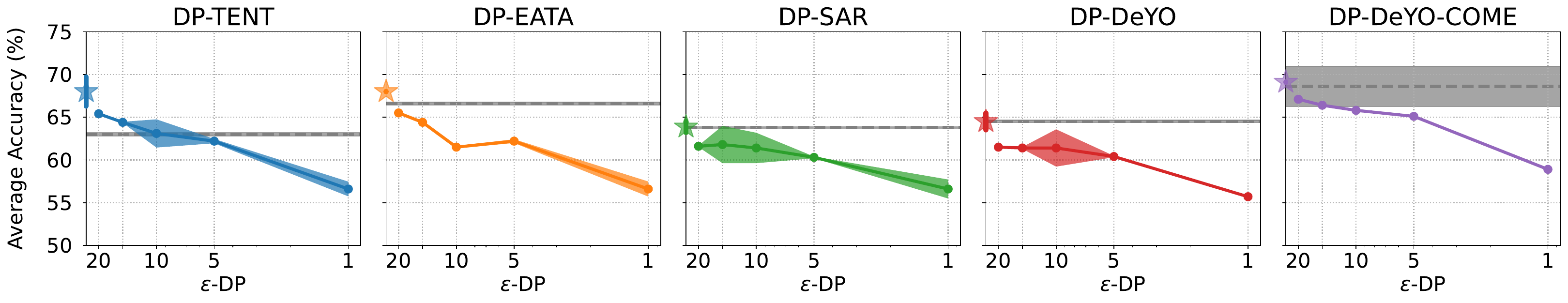}
\caption{%
\textbf{Privacy/accuracy trade-off for DP-TTA methods at different privacy budgets in the episodic setting.} The results show average top-1 accuracy (\%) under different privacy budgets $(\varepsilon\text{-DP},\delta=1e^{-6})$ when adapting on the 15 corruption types of ImageNet-C at severity 5 in the continual setting.
The gray dashed line and star ($\star$) are the \textit{non-private baseline} and our \textit{non-private + clip} editions of each method. The shaded area represent the standard deviation in the accuracy scores, where each experiment is repeated 5 times with different seed. 
}
\label{fig:priv_util_episodic_new}
\end{figure*}

Table~\ref{tab:privacy_accuracy_tradeoff_eps} and \ref{tab:privacy_accuracy_tradeoff_eps_episodic} shows the detailed results of the privacy-utility tradeoff decomposed with per-corruption-type accuracies, under the continual and episodic adaptation settings respectively. 
We add results from two non-private baselines with $\varepsilon=\infty$, the TTA method itself and TTA + clip, for easier comparisons. 
In general, the privacy-utility tradeoff applies: for most corruption types under both adaptation settings, we observe a decreasing average accuracy with stronger privacy guarantee (lower $\varepsilon$). 
However, due to the strong advantage from per-sample gradient clipping alone, DP-Tent can achieve comparable or sometimes better performance than the non-private (not-clipped) baselines. 

\begin{table}[htb]
\centering
\caption[Clean accuracy on ImageNet-1k validation]{%
\textbf{Clean accuracy on ImageNet-1k validation.}
We evaluate the source model and each DP-TTA method under three privacy budgets ($\bm{\varepsilon}$),
using the best DP-tuned hyperparameters selected per $\bm{\varepsilon}$.
This sanity check verifies that the DP settings do not substantially degrade performance on data without shift.
}
\small
\setlength{\tabcolsep}{5pt}
\resizebox{\columnwidth}{!}{%
\begin{tabular}{l*{16}{c}}
\toprule
Method 
& Source ($f_{\theta_0}$)
& \multicolumn{3}{c}{DP-TENT} 
& \multicolumn{3}{c}{DP-EATA} 
& \multicolumn{3}{c}{DP-SAR} 
& \multicolumn{3}{c}{DP-DeYO} 
& \multicolumn{3}{c}{DP-DeYO-COME} \\
\cmidrule(lr){2-2}\cmidrule(lr){3-5}\cmidrule(lr){6-8}\cmidrule(lr){9-11}\cmidrule(lr){12-14}\cmidrule(lr){15-17}
Privacy ($\bm{\varepsilon}$) 
& -- 
& $\infty$ & 1 & 5  
& $\infty$ & 1 & 5
& $\infty$ & 1 & 5 
& $\infty$ & 1 & 5 
& $\infty$ & 1 & 5  \\
\midrule
Clean Acc. (\%) 
& 85.1 
& 84.7 & 83.0 & 81.8
& 84.3 & 83.9 & 84.1 
& 84.5 & 84.7 & 83.8 
& 85.1 & 83.5 & 83.8 
& 85.1 & 84.4 & 84.3  \\
\bottomrule
\end{tabular}
}
\label{tab:clean_acc_dp_sanity}
\end{table}
To ensure that our method does not degrade performance on clean data, we evaluate it on the original (clean) test set using the hyperparameters that yield the best DP-utility accuracy in the continual setting and compare against the original baseline. 
The results are reported in Table~\ref{tab:clean_acc_dp_sanity}.
Under the two most stringent privacy budgets, the drop in clean accuracy is small.

\begin{table}[htb]
\centering
\caption{\textbf{Effect of batch size on adaptation performance.} The results are mean top-1 accuracy (\%) over 15 corruption types when adapting on ImageNet-C at severity level 5 under a \textbf{episodic} setting.}
\resizebox{0.5\textwidth}{!}{%
\begin{tabular}{lcccccc}
\toprule
\multirow{2}{*}{Method} & \multicolumn{6}{c}{Batch Size} \\
\cmidrule(lr){2-7}
 & 1 & 8 & 16 & 32 & 64 & 128 \\
\midrule
Tent  & 49.8 & 63.1 & 62.8 & 63.2 & 63.0 & 62.1 \\
Tent + clip  & 67.7 & 67.6 & 68.4 & 67.6 & 68.0 & 67.7 \\
DP-Tent($\epsilon=20$) & 59.6 & 60.7 & 61.4 & 61.8 & 62.9 & 62.1 \\
DP-Tent($\epsilon=15$) & 58.9 & 60.5 & 61.2 & 61.8 & 62.6 & 62.5 \\
DP-Tent($\epsilon=10$) & 56.9 & 60.5 & 60.8 & 61.6 & 62.1 & 62.3 \\
DP-Tent($\epsilon=5$) & 56.8 & 57.1 & 53.5 & 60.6 & 60.8 & 61.7 \\
DP-Tent($\epsilon=1$) & 55.7 & 55.8 & 56.8 & 57.4 & 58.5 & 59.4 \\
\midrule
EATA  & 65.2 & 66.1 & 66.5 & 67.1 & 66.6 & 65.3 \\
EATA + clip & 68.0 & 68.2 & 68.2 & 68.2 & 68.0 & 66.9 \\
DP-EATA($\epsilon=20$) & 58.7 & 55.3 & 61.3 & 64.8 & 65.5 & 66.3 \\
DP-EATA($\epsilon=15$) & 56.3 & 58.4 & 58.7 & 64.4 & 64.4 & 65.8 \\
DP-EATA($\epsilon=10$) & 56.7 & 57.2 & 61.2 & 59.3 & 61.5 & 64.3 \\
DP-EATA($\epsilon=5$) & 56.8 & 56.7 & 60.4 & 58.1 & 62.2 & 63.1 \\
DP-EATA($\epsilon=1$) & 55.7 & 55.7 & 55.5 & 55.8 & 56.6 & 58.0 \\
\midrule
SAR  & 64.3 & 64.4 & 64.3 & 63.6 & 63.8 & 61.9 \\
SAR + clip & 64.6 & 65.1 & 64.7 & 64.6 & 64.2 & 63.6 \\
DP-SAR($\epsilon=20$) & 59.1 & 60.5 & 61.2 & 61.4 & 61.6 & 62.4 \\
DP-SAR($\epsilon=15$) & 58.5 & 56.5 & 60.2 & 61.6 & 61.8 & 61.8 \\
DP-SAR($\epsilon=10$) & 56.6 & 59.3 & 59.0 & 60.4 & 61.4 & 61.8 \\
DP-SAR($\epsilon=5$) & 55.8 & 55.7 & 58.8 & 59.5 & 60.3 & 60.6 \\
DP-SAR($\epsilon=1$) & 55.5 & 55.7 & 56.2 & 55.6 & 56.6 & 57.0 \\
\bottomrule
\end{tabular}
}
\label{tab:dp_tent_tune_bs}
\end{table}

\begin{table}[htb]
\centering
\caption{%
\textbf{Privacy-accuracy tradeoff for DP-TTA methods with varying noise level.} We measure top-1 accuracy (\%) under different privacy budgets $(\varepsilon,\delta=1e^{-6})$, when adapting \textbf{ViT} on ImageNet-R.}
\resizebox{0.5\textwidth}{!}{%
\begin{tabular}{lccccccc}
\toprule
\multirow{2}{*}{Method} & \multicolumn{7}{c}{Privacy ($\bm{\varepsilon}$)} \\
\cmidrule(lr){2-8}
 & $\infty$ & $\infty$(clip) & 1 & 5 & 10 & 15 & 20 \\
\midrule
Source ($f_{\theta_0}$) & 59.2 & - & - & - & - & - & - \\
Tent & 54.2 & 68.6 & 61.2 & 64.4 & 66.5 & 66.8 & 67.0 \\
EATA & 63.9 & 67.4 & 61.2 & 64.3 & 66.7 & 67.6 & 68.1 \\
SAR  & 62.1 & 68.0 & 61.3 & 64.1 & 64.3 & 65.1 & 65.7 \\
DeYO & 63.7 & 67.6 & 61.4 & 64.8 & 65.2 & 65.4 & 66.0 \\
DeYO-COME & 63.6 & 67.3 & 59.2 & 60.7 & 61.4 & 61.7 & 61.8  \\
\bottomrule
\end{tabular}
}
\label{tab:inr_vit_newnew}
\end{table}

\begin{table*}[htb]
\centering
\caption{%
\textbf{Privacy-accuracy tradeoff for DP-TTA methods with varying noise level.} We measure top-1 accuracy (\%) over 15 common corruption types under different privacy budgets $(\varepsilon,\delta=1e^{-6})$, when adapting \textbf{ConvNeXt\_Tiny} on ImageNet-C at severity level 5 in the \textbf{continual} setting.}
\label{tab:inc_convnext_newnew_continual}
\resizebox{0.95\textwidth}{!}{%
\begin{tabular}{lc|ccc|cccc|cccc|cccc|c}
\toprule
\multirow{2}{*}{\textbf{Method}} &
\multirow{2}{*}{$\bm{\varepsilon}$\textbf{-DP}} &
\multicolumn{3}{c|}{\textbf{Noise}} &
\multicolumn{4}{c|}{\textbf{Blur}} &
\multicolumn{4}{c|}{\textbf{Weather}} &
\multicolumn{4}{c|}{\textbf{Digital}} &
\multirow{2}{*}{\textbf{Avg.}} \\
\cline{3-17}
& & Gauss. & Shot & Impul.
& Defoc. & Glass & Motion & Zoom
& Snow & Frost & Fog & Brit.
& Contr. & Elastic & Pixel & JPEG
& \\
\midrule
Source ($f_{\theta_0}$) & $\infty$ & 41.6 & 41.1 & 41.5 & 35.4 & 15.9 & 37.1 & 37.2 & 43.4 & 53.2 & 45.0 & 73.3 & 41.9 & 25.2 & 56.3 & 60.6 & 43.2 \\ \midrule
Tent & $\infty$ & 43.2 & 47.1 & 48.8 & 39.1 & 34.4 & 45.5 & 46.6 & 50.5 & 53.6 & 60.5 & 74.1 & 55.1 & 7.6 & 2.6 & 0.2 & 40.6\\
Tent + clip& $\infty$ & 41.6 & 44.8 & 46.8 & 32.5 & 23.5 & 40.1 & 42.6 & 47.7 & 50.8 & 59.5 & 72.7 & 50.1 & 9.2 & 48.5 & 58.6 & 45.6\\
DP-Tent & $20$ & 44.7 & 48.8 & 50.5 & 37.3 & 37.1 & 46.3 & 49.2 & 52.4 & 56.5 & 59.0 & 73.8 & 54.4 & 41.8 & 59.4 & 63.4 & 51.6\\
& $15$ & 44.7 & 48.7 & 50.3 & 36.9 & 37.2 & 46.0 & 49.1 & 52.3 & 56.5 & 58.9 & 73.6 & 53.9 & 40.9 & 59.6 & 63.2 & 51.5\\
& $10$ & 44.5 & 48.5 & 50.1 & 35.5 & 34.9 & 45.4 & 48.8 & 51.9 & 56.2 & 58.4 & 73.3 & 52.9 & 40.5 & 59.2 & 63.1 & 50.9\\
& $5$ & 43.4 & 46.9 & 48.1 & 32.3 & 33.7 & 43.3 & 46.7 & 50.2 & 55.2 & 57.9 & 72.9 & 50.0 & 36.3 & 57.9 & 63.6 & 49.2\\
& $1$ &41.6 & 41.2 & 41.6 & 36.1 & 16.1 & 37.2 & 37.8 & 43.8 & 52.5 & 45.4 & 73.2 & 42.3 & 25.6 & 56.7 & 60.9 &43.5\\
 \midrule
EATA & $\infty$ & 48.3 & 50.9 & 51.6 & 43.8 & 42.0 & 50.1 & 52.6 & 57.9 & 59.5 & 64.6 & 76.3 & 52.4 & 55.2 & 63.9 & 66.9 & 55.7\\
EATA + clip & $\infty$ & 48.4 & 52.6 & 53.2 & 39.3 & 45.4 & 49.1 & 53.3 & 58.0 & 60.5 & 65.1 & 75.8 & 57.0 & 59.8 & 65.9 & 66.8 & 56.7\\
DP-EATA & $20$ & 44.7 & 48.8 & 50.3 & 37.2 & 37.4 & 46.3 & 49.1 & 52.2 & 56.4 & 58.7 & 73.2 & 53.9 & 41.9 & 59.0 & 63.0 & 51.5\\
& $15$ &44.6 & 48.6 & 50.2 & 36.8 & 37.4 & 46.0 & 49.0 & 52.1 & 56.4 & 58.6 & 73.1 & 53.4 & 40.9 & 59.1 & 62.9 & 51.3\\
& $10$ & 44.5 & 48.4 & 50.0 & 35.4 & 35.2 & 45.3 & 48.7 & 51.7 & 56.0 & 58.1 & 72.8 & 52.4 & 40.6 & 58.7 & 62.8 & 50.7\\
& $5$ & 43.4 & 46.8 & 48.0 & 32.2 & 33.9 & 43.2 & 46.6 & 50.1 & 55.2 & 57.7 & 72.6 & 49.7 & 36.3 & 57.6 & 63.2 & 49.1\\
& $1$ & 41.6 & 41.2 & 41.6 & 36.1 & 16.1 & 37.2 & 37.8 & 43.8 & 52.5 & 45.4 & 73.2 & 42.4 & 25.6 & 56.7 & 60.9 & 43.5\\
\bottomrule
\end{tabular}%
}
\end{table*}

\begin{table*}[htb]
\centering
\caption{%
\textbf{Privacy-accuracy tradeoff for DP-TTA methods with varying noise level.} We measure top-1 accuracy (\%) over 15 common corruption types under different privacy budgets $(\varepsilon,\delta=1e^{-6})$, when adapting \textbf{ConvNeXt\_Tiny} on ImageNet-C at severity level 5 in the \textbf{episodic} setting.}
\label{tab:inc_convnext_newnew_episodc}
\resizebox{0.95\textwidth}{!}{%
\begin{tabular}{lc|ccc|cccc|cccc|cccc|c}
\toprule
\multirow{2}{*}{\textbf{Method}} &
\multirow{2}{*}{$\bm{\varepsilon}$\textbf{-DP}} &
\multicolumn{3}{c|}{\textbf{Noise}} &
\multicolumn{4}{c|}{\textbf{Blur}} &
\multicolumn{4}{c|}{\textbf{Weather}} &
\multicolumn{4}{c|}{\textbf{Digital}} &
\multirow{2}{*}{\textbf{Avg.}} \\
\cline{3-17}
& & Gauss. & Shot & Impul.
& Defoc. & Glass & Motion & Zoom
& Snow & Frost & Fog & Brit.
& Contr. & Elastic & Pixel & JPEG
& \\
\midrule
Source ($f_{\theta_0}$) & $\infty$ & 41.6 & 41.1 & 41.5 & 35.4 & 15.9 & 37.1 & 37.2 & 43.4 & 53.2 & 45.0 & 73.3 & 41.9 & 25.2 & 56.3 & 60.6 & 43.2  \\ \midrule
Tent & $\infty$ & 46.3 & 47.5 & 48.1 & 45.1 & 5.4 & 48.5 & 36.9 & 53.9 & 36.1 & 61.5 & 76.3 & 58.0 & 1.8 & 63.6 & 65.8 & 46.3
\\
Tent + clip& $\infty$ & 46.6 & 48.1 & 48.4 & 43.7 & 7.1 & 47.8 & 43.8 & 54.8 & 53.9 & 60.6 & 76.6 & 55.9 & 1.6 & 64.6 & 67.3 & 48.1\\
DP-Tent & $20$ & 44.7 & 46.3 & 46.6 & 41.4 & 17.3 & 45.4 & 45.1 & 50.9 & 49.3 & 57.2 & 75.7 & 53.9 & 3.0 & 62.6 & 65.9 & 47.0\\
& $15$ & 44.7 & 46.3 & 46.5 & 41.4 & 17.6 & 45.4 & 45.0 & 50.9 & 49.2 & 57.2 & 75.6 & 53.8 & 3.0 & 62.6 & 65.9 & 47.0\\
& $10$ &44.5 & 46.3 & 46.4 & 41.2 & 18.3 & 45.2 & 44.9 & 50.7 & 49.0 & 57.1 & 75.6 & 53.6 & 3.0 & 62.5 & 65.7 &46.9\\
& $5$ & 43.9 & 45.8 & 45.9 & 40.1 & 20.6 & 44.2 & 44.2 & 50.1 & 46.0 & 56.6 & 75.3 & 52.4 & 3.1 & 62.0 & 65.4 & 46.4\\
& $1$ & 40.5 & 41.9 & 42.2 & 37.9 & 19.8 & 38.7 & 39.9 & 43.6 & 43.0 & 47.9 & 73.4 & 46.6 & 20.5 & 58.8 & 62.1 & 43.8\\
 \midrule
EATA & $\infty$ &48.3 & 49.7 & 50.2 & 44.4 & 41.8 & 49.9 & 51.1 & 56.8 & 57.3 & 63.6 & 76.8 & 39.9 & 52.3 & 64.1 & 66.3 & 54.2
  \\
EATA + clip & $\infty$ & 48.4 & 49.5 & 50.3 & 42.5 & 41.0 & 49.9 & 51.8 & 57.9 & 57.8 & 62.2 & 76.2 & 53.3 & 58.7 & 66.2 & 67.4 & 55.6 \\
DP-EATA & $20$ & 44.7 & 46.3 & 46.6 & 41.5 & 17.6 & 45.4 & 45.0 & 50.9 & 49.3 & 57.1 & 75.6 & 53.9 & 3.1 & 62.6 & 65.9 & 47.0\\
& $15$ & 44.6 & 46.3 & 46.5 & 41.4 & 17.9 & 45.4 & 45.0 & 50.9 & 49.3 & 57.1 & 75.5 & 53.8 & 3.1 & 62.6 & 65.8 & 47.0\\
& $10$ & 44.5 & 46.2 & 46.4 & 41.2 & 18.4 & 45.2 & 44.9 & 50.7 & 49.0 & 57.0 & 75.5 & 53.6 & 3.1 & 62.5 & 65.7 & 46.9\\
& $5$ &43.9 & 45.7 & 45.9 & 40.2 & 20.8 & 44.2 & 44.1 & 50.0 & 46.2 & 56.5 & 75.3 & 52.5 & 3.1 & 62.0 & 65.4 & 46.4\\
& $1$ & 40.5 & 41.9 & 42.2 & 37.9 & 19.8 & 38.8 & 39.9 & 43.6 & 43.0 & 47.9 & 73.4 & 46.5 & 20.9 & 58.8 & 62.1 & 43.8\\
\bottomrule
\end{tabular}%
}
\end{table*}

\subsection{Results on the ImageNet-R Dataset and ConvNeXt Model}
To check the generality of our results we perform the same main experiments on a new dataset, ImageNet-R, and a new model from a different family, ConvNeXt. 
\label{appendix:more-datasets-models}
Table~\ref{tab:inr_vit_newnew} shows the accuracy of the baseline adaptation method and its clipping and DP variants at different privacy levels when adapting the ImageNet-R dataset with ViT source model.
The results are similar to those on ImageNet-C, where DP-Tent outperform non-private (not-clipped) Tent baseline at the strongest $\epsilon=1$ privacy level, and DP-EATA, DP-SAR, DP-DeYO, DP-DeYO-COME outperform the baseline at lower privacy levels.
Notably, TTA+clip methods show better performance than the not-clipped baseline with $1.9-14\%$ improvements in accuracy.
Table~\ref{tab:inc_convnext_newnew_continual} and Table~\ref{tab:inc_convnext_newnew_episodc} show the accuracy of the baseline adaptation methods and their clipping and DP variants at different privacy levels under continual and episodic adaptation, respectively. 
We observe a similar pattern: DP-Tent outperforms the Tent baseline at several privacy levels.
Moreover, per-sample gradient clipping consistently improves the accuracy of both Tent and EATA in both continual and episodic settings, with gains of $1\%$--$5\%$.

\subsection{Statistical Variance in Adaptation Performance}
\label{appendix:variance-analysis}
For the main experiments on ImageNet-C with a ViT source model, we repeat each experiment five times with different random seeds and report the standard deviation of the average accuracy across the 15 corruption types. Standard deviation values are shown in brackets in Table~\ref{tab:privacy_accuracy_tradeoff_eps} and Table~\ref{tab:privacy_accuracy_tradeoff_eps_episodic}, and as shaded regions in Figure~\ref{fig:priv_util_continual} and Figure~\ref{fig:priv_util_episodic_new}. We also report standard deviations for the ablation experiments in Table~\ref{tab:per_sample_clip_compare_reset0} and Table~\ref{tab:per_sample_clip_compare_reset1}. Overall, the TTA baselines, TTA+clip, and DP-TTA methods show relatively low variance across repeated runs, with most standard deviations across five runs remaining within $1\%$ accuracy.

We note an exceptional case under the episodic adaptation setting, e.g., Figure~\ref{fig:priv_util_episodic_new} and Table~\ref{tab:privacy_accuracy_tradeoff_eps_episodic}, where we observe larger standard deviations for DP-Tent and DP-DeYO at $\epsilon=10$, DP-SAR at $\epsilon=10,15$, and the DeYO-COME baseline. This increased variance is primarily caused by the ``fog'' and ``contrast'' corruption classes, which have substantial performance fluctuations over random seeds (e.g. between 23.3\% to 86.5\% for DP-DeYO at $\epsilon=10$). 
In such cases, lowering learning rate would often lead to decreased but less variable accuracy, e.g. comparing to the best tuned DP-DeYO at $\epsilon=10$ with learning rate $0.001$ has average accuracy of $61.4\%$ with standard deviation $2.18$, decreasing learning rate to $0.0005$ and $0.0001$ lead to mean accuracy of $59.6\%, 56.1\%$ with standard deviation $1.73, 0.84$ respectively.

\subsection{Hyperparameter Sensitivity and Privacy Accounting for Hyperparameter Tuning}
\label{appendix:hyperparameter-sensitive-analysis}
We evaluate hyperparameter sensitivity and show the results are robust to the tuning method. Table~\ref{tab:hyperparam_sensitivity} shows the result when tuning per-sample clipping threshold ($C$) and learning rate ($lr$) over a grid for two DP-TTA settings, DP-Tent in continual adaptation setting at $\epsilon=10$, and DP-EATA in episodic adaptation setting at $\epsilon=1$, when adapting on ImageNet-C with ViT source model. We skip entries (marked by ``/'') because smaller $C$ usually pairs well with larger learning rates and vice versa. For DP-Tent-Continual at $\epsilon=10$, we observe that accuracy remains stable (within 58\%–62\%) for many hyperparameter combinations, but fails for some combinations (with accuracy $<10\%$). Similarly, for DP-EATA-Episodic at $\epsilon=1$, most results are within the range of 52\%–57\%, with some clear failing hyperparameter choices with accuracy $<10\%$. We observe similar patterns for other DP-TTA hyperparameter search results. Overall, our DP TTA works at a range of values.

As privacy accounting for tuning is often a practical concern, we provide an estimate on the additional privacy cost for tuning DP-TTA methods. Taking the DP-Tent-Continual setting in Table~\ref{tab:hyperparam_sensitivity} as a demonstration, we can tune our $\epsilon=1$ model over $k=14$ tunings by releasing the DP accuracy of each with additive Gaussian noise. We compute accuracy from $n=750,000$ samples, and the $L2$-sensitivity under our change-one neighboring definition is $\Delta=1/n \approx 2\times10^{-6}$. Adding Gaussian noise with a scale of $\sigma=0.01$ (the accuracy varies within $\pm0.03$ with 99\% probability) gives a privacy cost of $\epsilon=\Delta \times \sqrt{2\ln(1.25/\delta)} / \sigma \approx 0.0011$ for $\delta=1\times10^{-6}$. This is small comparing to the privacy cost of $\epsilon=1$ for each run. The main privacy cost is fitting all $k$ models, which scales as $\sqrt{k}$ through DP composition. However, existing work~\cite{liu2018privateselectionprivatecandidates,papernot2022hyperparametertuningrenyidifferential} provides a better analysis of this process, showing that the total privacy cost of fitting and selecting DP models can be as low as $2\times$-$3\times$ the cost of fitting and evaluating one model.

We further evaluate whether hyper-parameters tuned on a different distribution transfer to ImageNet-C. 
Since ImageNet-R contains far fewer samples than ImageNet-C, we simulate the continual adaptation setting by running adaptation over ImageNet-R for five consecutive passes when tuning hyper-parameters. 
We then apply the selected hyper-parameters to ImageNet-C. 
For Tent, EATA, and SAR, the tuned hyper-parameters largely transfer well: under small privacy budgets, the resulting ImageNet-C accuracy decreases only mildly compared with tuning directly on ImageNet-C. 
The main exception is at $\epsilon=5$, where the gap is more noticeable. 
This suggests that DP-TTA performance is not highly sensitive to the exact tuning distribution, and that a small auxiliary dataset such as ImageNet-R can provide a practical alternative for hyperparameter selection.

\begin{table}[htb]
\centering
\caption{%
\textbf{Hyperparameter sensitivity} for average accuracy over 15 corruption types, when tuning the per-sample clipping threshold ($C$) and learning rate ($lr$) over a grid, with skipped entries (marked by ``/'') because smaller $C$ usually pairs well with larger learning rates and vice versa.
DP-TTA methods can work at a range of values.
}
\resizebox{0.7\textwidth}{!}{%
\begin{tabular}{c|ccccccccc}
\toprule
 & \textbf{C\textbackslash lr} & \textbf{0.0001} & \textbf{0.0005} & \textbf{0.001} & \textbf{0.005} & \textbf{0.01} & \textbf{0.05} & \textbf{0.1} & \textbf{0.5} \\
\multirow{3}{*}{\textbf{\begin{tabular}[c]{@{}c@{}}DP-Tent-Continual\\ ($\epsilon=10$)\end{tabular}}} & \textbf{0.1} & / & / & / & 59.8 & 60.9 & 62.1 & 60.7 & 8.4 \\
 & \textbf{1} & / & / & 60.9 & 61.9 & 60.0 & 8.3 & 2.0 & / \\
 & \textbf{10} & 60.3 & 61.0 & 58.2 & 5.9 & / & / & / & / \\
 \midrule
 & \textbf{C\textbackslash lr} & \textbf{0.0001} & \textbf{0.0005} & \textbf{0.001} & \textbf{0.005} & \textbf{0.01} & \textbf{0.05} & \textbf{0.1} & \textbf{0.5} \\
\multirow{3}{*}{\textbf{\begin{tabular}[c]{@{}c@{}}DP-EATA-Episodic\\ ($\epsilon=1$)\end{tabular}}} & \textbf{0.1} & / & / & / & 56.0 & 56.6 & 52.2 & 10.6 & 1.8 \\
 & \textbf{1} & / & / & 56.5 & 53.1 & 34.7 & 1.8 & 0.9 & / \\
 & \textbf{10} & 49.5 & 56.2 & 28.1 & 2.1 & / & / & / & / \\
 \bottomrule
\end{tabular}%
}
\label{tab:hyperparam_sensitivity}
\end{table}

\begin{table}[htb]
\centering
\caption{%
\textbf{Hyperparameter sensitivity} for tuning the per-sample clipping threshold ($C$) and learning rate ($lr$) on ImageNet-R and testing on ImageNet-C (continual setting).
This verifies that in most cases the hyperparameters generalize across shifts, so that it is possible to tune on separate data without any effect on the privacy of the test data.
}
\begin{tabular}{lcccccccccc}
\toprule
\multirow{2}{*}{Method} 
& \multicolumn{5}{c}{Tune on IN-R} 
& \multicolumn{5}{c}{Tune on IN-C} \\
\cmidrule(lr){2-6} \cmidrule(lr){7-11}
& \multicolumn{5}{c}{Privacy ($\bm{\varepsilon}$)}
& \multicolumn{5}{c}{Privacy ($\bm{\varepsilon}$)} \\
\cmidrule(lr){2-6} \cmidrule(lr){7-11}
& 20 & 15 & 10 & 5 & 1 
& 20 & 15 & 10 & 5 & 1 \\
\midrule
Tent & 62.9 & 62.3 & 60.7 & 41.0 & 56.3 & 62.9 & 62.6 & 62.1 & 60.8 & 58.5 \\
EATA & 63.0 & 60.8 & 60.8 & 40.8 & 56.4 & 63.0 & 62.6 & 62.1 & 61.2 & 58.5 \\
SAR  & 58.3 & 58.3 & 58.3 & 58.1 & 53.7 & 59.6 & 59.4 & 59.4 & 59.0 & 57.4 \\
\bottomrule
\end{tabular}
\label{tab:hyperparam_holdout}
\end{table}

\begin{table}[htb]
\centering
\caption{\textbf{Hyperparameter sensitivity} for tuning per-sample clipping threshold ($C$) and learning rate ($lr$) on ImageNet-R and test on ImageNet-C (episodic setting).
}
\begin{tabular}{lcccccccccc}
\toprule
\multirow{2}{*}{Method} 
& \multicolumn{5}{c}{Tune on IN-R} 
& \multicolumn{5}{c}{Tune on IN-C} \\
\cmidrule(lr){2-6} \cmidrule(lr){7-11}
& \multicolumn{5}{c}{Privacy ($\bm{\varepsilon}$)}
& \multicolumn{5}{c}{Privacy ($\bm{\varepsilon}$)} \\
\cmidrule(lr){2-6} \cmidrule(lr){7-11}
& 20 & 15 & 10 & 5 & 1 
& 20 & 15 & 10 & 5 & 1 \\
\midrule
Tent & 51.8 & 64.2 & 61.3 & 62.2 & 52.2 & 65.4 & 64.4 & 63.1 & 62.2 & 56.6 \\
EATA & 60.9 & 64.4 & 61.5 & 59.0 & 52.2 & 65.5 & 54.4 & 61.5 & 62.2 & 56.6 \\
SAR  & 61.6 & 56.9 & 61.4 & 32.0 & 50.7 & 61.6 & 61.8 & 61.4 & 60.3 & 56.6 \\
\bottomrule
\end{tabular}
\label{tab:hyperparam_holdout_episodic}
\end{table}

\subsection{Effect of Batch Size on TTA and DP-TTA Performances}
We report the average accuracy over 15 corruption types in an episodic setting with ImageNet-C and ViT source model, when changing the batch size $B_t$ of the test data. 
\label{appendix:batch-size-analysis}
Table~\ref{tab:dp_tent_tune_bs} shows that the performance of DP-TTA methods is reasonably stable across batch sizes. 
Smaller batch sizes, e.g. $1, 8, 16, 32$, perform worse due to more noisy DP updates, though do not collapse, even at the extreme of batch size $1$. 
Larger batch sizes mean less noisy updates, improving performance, though this effect competes with slower adaptation, which can decrease performance as shown in Tent, EATA, SAR and their $+$clip alternatives. 
We observe that there can be a crossover between adaptation benefit and noise accumulation when varying batch sizes. 
For example, in the large-batch regime of $128$, DP-SAR at $\epsilon=20$ surpasses the non-private SAR baseline.
We note that the best learning rate decreases with batch size, with for instance the best learning rate for DP-EATA($\epsilon=1$) at batch size $128/64/32/16/8/1$ being $1\times10^{-3}$/$10^{-3}$/$10^{-4}$/$10^{-4}$/$10^{-4}$/$10^{-5}$, respectively, partially compensating for the increase in noise size compared to gradient size.

\end{document}